\documentclass[letterpaper]{article} 
\usepackage[preprint]{aaai2027}  
\usepackage{times}  
\usepackage{helvet}  
\usepackage{courier}  
\usepackage[hyphens]{url}  
\usepackage{graphicx} 
\usepackage{natbib}  
\usepackage{booktabs}
\usepackage{multirow}
\usepackage{makecell}
\usepackage{array}
\usepackage{caption} 
\usepackage{amsmath} 
\usepackage{amsthm}
\usepackage{amsfonts} 
\usepackage{amssymb}
\usepackage{rotating}
\usepackage{tabularx}

\newtheorem{proposition}{Proposition}
\newtheorem{theorem}{Theorem}
\usepackage{algorithm}
\usepackage{algorithmic}

\usepackage{newfloat}
\usepackage{listings}
\DeclareCaptionStyle{ruled}{labelfont=normalfont,labelsep=colon,strut=off} 
\floatstyle{ruled}
\newfloat{listing}{tb}{lst}{}
\floatname{listing}{Listing}
\title{MeClear: Cooperative Game-Theoretic Attribution and Risk-Aware Memory Clearance for Long-Horizon LLM Agents{}}
\author{
    Boyu Yang\equalcontrib\textsuperscript{\rm 1},
    Jiazheng Sun\equalcontrib\textsuperscript{\rm 1},
    Zilong Lu\textsuperscript{\rm 1},
    Zhi Qiu\textsuperscript{\rm 2},
    Xin Peng\textsuperscript{\rm 1},
    Jun Zheng\textsuperscript{\rm 2},
}

\affiliations{
    \textsuperscript{\rm 1}College of Computer Science and Artificial Intelligence, Fudan University,
    Shanghai 200433, China\\
    \textsuperscript{\rm 2}School of Cyberspace Science and Technology, Beijing Institute of Technology, 
    Beijing 100081, China
}

\usepackage{bibentry}

\begin{document}

\maketitle

\begin{abstract}
Long horizon Large Language Model (LLM) agents rely on external memory systems to preserve user preferences and task knowledge across extended interactions. Conventional retrieval mechanisms optimize semantic compatibility rather than downstream utility, frequently introducing outdated, misleading, or conflicting evidence into the active context. We present MeClear, a task conditioned memory clearance framework that identifies memories featuring negative downstream utility through cooperative attribution and selectively suppresses them from agent execution. MeClear combines Leave One Out screening with sampled cooperative Shapley attribution to distribute utility across interacting evidence, effectively resolving redundant conflict masking where single removal evaluations fail. Utilizing attribution rankings, MeClear executes a query scoped minimal clearance strategy over a nested filtration, verifying task recovery on the cleared context without permanently altering the persistent memory bank. Comprehensive experimental evaluations across ten long dialogue memory pools demonstrate that MeClear achieves a target recall of 85.9\% and an overall task recovery rate of 82.3\%, representing a 25.5 percentage point improvement over Leave One Out (LOO) baselines.
\end{abstract}

\begin{links}
    \link{Code}{https://github.com/FudanSELab/MeClear}
\end{links}


\section{Introduction}

Long-horizon Large Language Model (LLM) agents increasingly depend on external memory to retain preferences, observations, and state across extended interactions \cite{wang2024survey,zhang2025memorysurvey}. Early architectures demonstrated that persistent experience supports planning, reflection, and continual adaptation \cite{park2023generative}. However, growing interaction histories accumulate stale, misleading, redundant, or incompatible records. Recent studies reveal that inaccurate experiences propagate errors across future tasks \cite{xiong2025memory}, while semantically related memories often prove contextually inappropriate or functionally harmful \cite{zhang2026beyond,ha2026memguard}. These findings expose a fundamental failure of relevance-centered retrieval: high semantic similarity does not ensure downstream task utility. Reliable memory deployment therefore demands task-conditioned, post-retrieval evaluation of how retrieved records causally influence agent execution.

As illustrated in Figure~\ref{fig:motivation}, jointly processed memories exhibit complex dependencies such as redundancy and joint harm, causing isolated record evaluations to fail. While recent frameworks refine credit assignment via evidence-anchored rewards \cite{ma2026finemem} or single-record causal interventions \cite{srivastava2026causal}, single-record deletion collapses when redundant memories independently sustain task failure, yielding zero observable counterfactual change. Coalition-based attribution resolves this local masking by measuring marginal contributions across subset permutations \cite{ghorbani2019datashapley,jia2019efficient,nematov2026source}. Because exact coalition enumeration is computationally prohibitive for multi-record context windows, practical clearance must efficiently approximate cooperative interaction effects while preserving sufficient contextual variation to uncover masked toxicity.

Directly modifying persistent memory introduces severe safety risks due to the inherently query-conditioned nature of memory utility across long-horizon deployment. A stored record that degrades performance on a query can provide indispensable contextual grounding for subsequent tasks, rendering permanent database overwrites prone to catastrophic cross-task performance regression \cite{wang2024wise,xu2025memory}. Retrieval-time defenses mitigate this danger by dynamically regulating active context visibility during inference rather than altering database records \cite{ha2026memguard,zhang2026beyond}. Nevertheless, existing admission filters fail to isolate which interacting memories causally induce downstream task failures, nor do they verify whether suppressing records restores execution accuracy. Reliable memory maintenance consequently demands reversible, minimal context clearance accompanied by explicit, post-suppression behavioral recovery verification.

\begin{figure*}[t]
    \includegraphics[width=\textwidth]{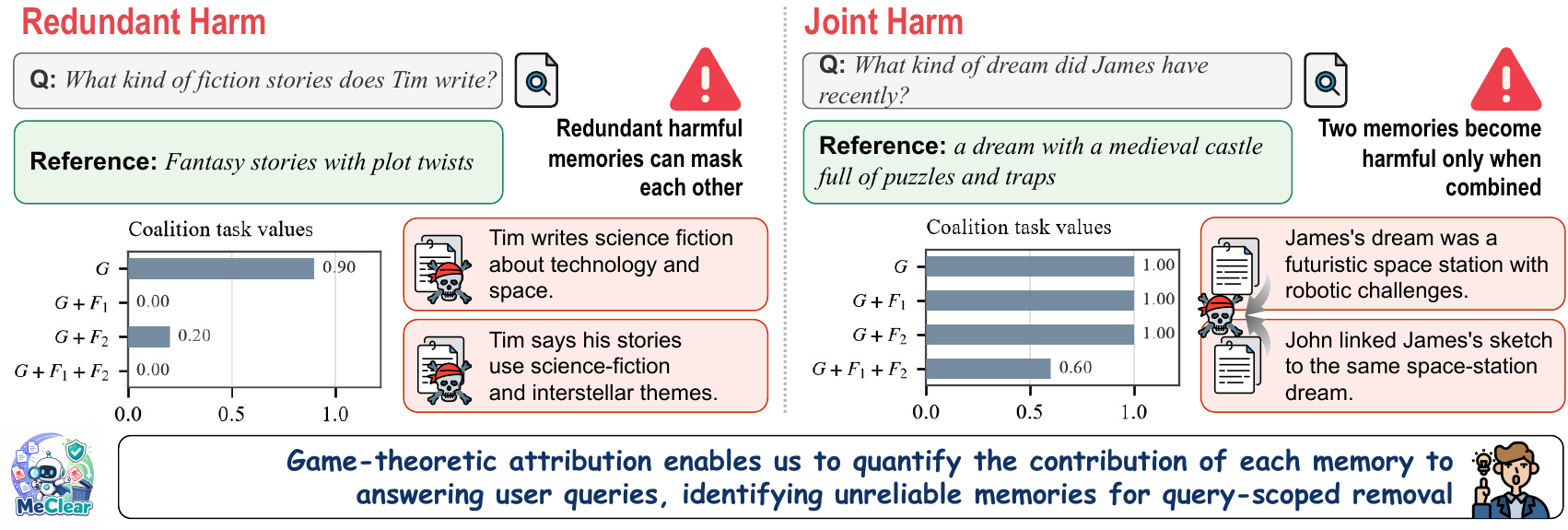}
    \caption{
   Motivation of MeRepair. Semantic relevance alone cannot determine whether retrieved memories are useful for future tasks. Historical utility may hide temporal degradation and memory interactions. MeRepair estimates future memory utility and performs uncertainty-aware
    repair to maintain reliable long-term memory for LLM agents.
    }
    \label{fig:motivation}
\end{figure*}


To address this problem, we present MeClear, a cooperative game-theoretic attribution and risk aware memory clearance framework for long horizon Large Language Model agents. MeClear formulates memory maintenance as a task conditioned decision process connecting counterfactual screening, cooperative Shapley attribution, and verified context clearance. The framework first isolates active contexts from retrieval stochasticity and applies Leave One Out (LOO) screening to filter strong positive evidence. It subsequently executes sampled cooperative attribution to quantify individual memory contributions across diverse coalitions, resolving redundant conflict masking where local deletion fails. Based on attribution priorities, MeClear executes a query scoped minimal clearance strategy over a nested candidate filtration, verifying task recovery prior to output generation without permanently altering the underlying memory bank. To the best of our knowledge, we are the first to leverage cooperative game-theoretic allocation to address harmful memory context clearance in LLM agents. 
Our contributions are summarized as follows:

\begin{itemize}
    \item We propose MeClear, a task conditioned memory clearance framework for long horizon Large Language Model agents, unifying cooperative game attribution, structural interaction diagnosis, and verified risk aware context clearance.

    \item We develop a cooperative memory attribution methodology integrating Leave One Out screening with sampled Shapley estimation, effectively resolving redundant conflict masking and modeling multi memory interactions.

    \item We conduct comprehensive evaluations across ten long dialogue memory pools, demonstrating that MeClear achieves an injected target recall of 85.9\% and an overall task recovery rate of 82.3\%, representing a 25.5 percentage point improvement over Leave One Out baselines.
\end{itemize}




\section{Related Work}
\label{sec:related_work}

\subsubsection{Memory Management and Operations for LLM Agents}
\label{subsubsec:agent_memory_systems}

External memory enables LLM agents to retain information across extended interactions. Early paradigms rely on heuristic retrieval \cite{park2023generative}, hierarchical context \cite{packer2023memgpt}, or verbal reflection \cite{shinn2023reflexion}, whereas recent frameworks optimize structured memory operations via graph representations, offline consolidation, or reinforcement learning \cite{chhikara2025mem0,zhang2026lightmem,wu2026gam,yu2026agentic,yan2026memoryr1}. However, superior retrieval does not guarantee downstream task utility. Semantic relevance frequently retrieves stale, redundant, or functionally incompatible records \cite{zhang2026beyond,ha2026memguard,liu2026worldmemarena}. Rather than relying solely on semantic similarity or admission filtering, MeClear evaluates post-retrieval functional effects and contextual interactions to dynamically regulate memory visibility.

\subsubsection{Long-Term Memory Evaluation and Attribution}
\label{subsubsec:memory_evaluation_attribution}

Long-term memory benchmarks assess overall response quality across multi-session dialogues and reasoning \cite{maharana2024locomo,wu2024longmemeval,li2026locomoplus}, but offer limited causal credit assignment for individual records. While operation-level rewards \cite{ma2026finemem} and single-record counterfactual interventions \cite{roy2025evidence,srivastava2026causal} attempt fine-grained attribution, they miss interaction-dependent phenomena such as mutual redundancy or joint toxicity. Although Shapley-based valuations capture coalition-level marginal contributions \cite{ghorbani2019datashapley,jia2019efficient,nematov2026source}, applying them to agentic context remains unexplored. MeClear adapts coalition-aware attribution to persistent memory, unifying interaction diagnostics with behaviorally verified context clearance under bounded computational budgets.

\subsubsection{Memory Editing and Safe Context Clearance}
\label{subsubsec:memory_editing_repair}

Model editing updates factual representations via parameter shifts or external patching \cite{meng2022rome,meng2023memit,mitchell2021mend,mitchell2021fast,wang2024wise}. In agent memory, record utility is dynamic and query-dependent; permanent modification risks propagating irreversible errors across future tasks \cite{yu2026agentic,yan2026memoryr1,xu2025memory}. Retrieval-time filtering avoids database overwrites \cite{ha2026memguard,zhang2026beyond} but fails to verify interaction-dependent behavioral consequences before suppressing context. MeClear bridges this gap by unifying reversible visibility control, counterfactual contribution analysis, and behavioral verification, suppressing harmful memory influences on a per-query basis while leaving the underlying database intact.

\section{Problem Formulation}
\label{sec:problem}

Consider a long-horizon Large Language Model (LLM) agent that continuously accumulates information across extended interactions. At task step $t$, the agent maintains an external memory bank $\mathcal{B}_t$ containing historical observations, factual evidence, and evolving user-specific information. Given a query $q_t$, an existing retrieval mechanism $\rho$ returns a bounded execution context

\begin{equation}
\mathcal{M}_t
=
\rho(q_t,\mathcal{B}_t;K)
=
\{m_1,m_2,\ldots,m_K\}
\subseteq
\mathcal{B}_t ,
\label{eq:retrieval_subset}
\end{equation}

where $K$ is the retrieval budget determined by the underlying memory system and the available context capacity. Each memory record is represented as $m_i=(x_i,\xi_i)$, where $x_i$ denotes its textual content and $\xi_i$ contains auxiliary information such as timestamp, source, confidence, or provenance. MeClear does not replace the underlying retriever. Instead, it operates on the already retrieved context $\mathcal{M}_t$ and evaluates whether each exposed memory actually benefits the current task. This separation is essential because semantic relevance does not guarantee downstream utility: a retrieved memory may be topically related to the query while being outdated, misleading, redundant, or incompatible with other active evidence.
Let $A_{\theta}$ denote the task agent parameterized by $\theta$. For any active memory coalition $S\subseteq\mathcal{M}_t$, the agent generates an output $y$ according to $p_{\theta}(y\mid q_t,S)$. We define the task-conditioned value of coalition $S$ as

\begin{equation}
v_t(S)
=
\mathbb{E}_{y\sim p_{\theta}(\cdot\mid q_t,S)}
\left[
r_t(y)
\right],
\qquad
v_t:2^{\mathcal{M}_t}\rightarrow[0,1],
\label{eq:task_value_function}
\end{equation}
where $r_t(y)\in[0,1]$ measures task correctness, and $v_t(S)$ denotes the query-conditioned expected utility under memory subset $S$, empirically estimated as $\widehat v_t(S)$ via agent execution. Because retrieved memories interact non-independently through redundancy or complementarity, we formulate $(\mathcal{M}_t,v_t)$ as a cooperative game with memories as players and $v_t$ as the characteristic function. The cooperative contribution of memory $m_i$ is defined as

\begin{equation}
\begin{aligned}
&\psi_{t,i} = {}  \\
& \sum_{S\subseteq\mathcal{M}_t\setminus\{m_i\}}
\frac{|S|!\left(K-|S|-1\right)!}{K!}
\left[
v_t(S\cup\{m_i\})-v_t(S)
\right].
\end{aligned}
\label{eq:memory_contribution}
\end{equation}

The coefficient in Equation~\eqref{eq:memory_contribution} equals the probability that $S$ forms the predecessor coalition of $m_i$ under a uniformly random ordering of the $K$ memories. Thus, $\psi_{t,i}$ measures the marginal effect of $m_i$ across diverse contextual coalitions rather than only around the complete retrieved context. The resulting allocation satisfies $\sum_{i=1}^{K}\psi_{t,i} = v_t(\mathcal{M}_t)-v_t(\emptyset)$. A positive contribution indicates that the memory improves task utility on average, whereas a negative value indicates that its presence decreases task utility across coalition contexts.
To separate meaningful harm from negligible negative variation, we introduce a single tolerance $\tau\geq0$ and define the query-conditioned harmful memory set as

\begin{equation}
\mathcal{H}_t
=
\left\{
m_i\in\mathcal{M}_t
\;\middle|\;
\psi_{t,i}<-\tau
\right\}.
\label{eq:true_harmful_set}
\end{equation}

Membership in $\mathcal{H}_t$ is not an intrinsic or permanent property of a memory record. It is determined jointly by the current query $q_t$, the retrieved context $\mathcal{M}_t$, the task agent $A_{\theta}$, and the task-value function $v_t$. MeClear therefore treats harmful-memory attribution as a query-conditioned visibility decision rather than an irreversible modification of the persistent memory bank.
For any candidate clearance set $\mathcal{C}\subseteq\mathcal{H}_t$, clearing $\mathcal{C}$ produces the active context $\mathcal{M}_t\setminus\mathcal{C}$. The corresponding task gain is defined as $g_t(\mathcal{C}) = v_t(\mathcal{M}_t\setminus\mathcal{C}) - v_t(\mathcal{M}_t)$. A desirable intervention should first maximize task recovery and then, among all interventions attaining the same recovery, remove the fewest memories. We formulate this parameter-free lexicographic objective as

\begin{equation}
G_t = \max_{\mathcal{C}\subseteq\mathcal{H}_t} g_t(\mathcal{C}),
\qquad
\mathcal{C}_t \in \operatorname*{arg\,min}_{\mathcal{C}\subseteq\mathcal{H}_t} \left\{ |\mathcal{C}| \;\middle|\; g_t(\mathcal{C}) = G_t \right\},
\label{eq:clearance_objective}
\end{equation}

where $G_t$ determines the maximum task gain attainable by clearing a subset of harmful memories, while $\mathcal{C}_t$ selects a minimum-cardinality intervention among all clearance sets attaining this gain. The formulation implements the minimal-intervention principle without introducing an additional weighting coefficient.
Because $\emptyset\subseteq\mathcal{H}_t$ and $g_t(\emptyset)=0$, the optimal gain always satisfies $G_t\geq0$. If no nonempty subset improves the task, the minimum-cardinality solution is $\mathcal{C}_t=\emptyset$. The exact contribution values and task gains are generally unavailable during execution. MeClear therefore approximates them through finite counterfactual evaluations and verifies the selected intervention on the current query.

\begin{figure*}[t]
  \centering
  \includegraphics[width=\textwidth]{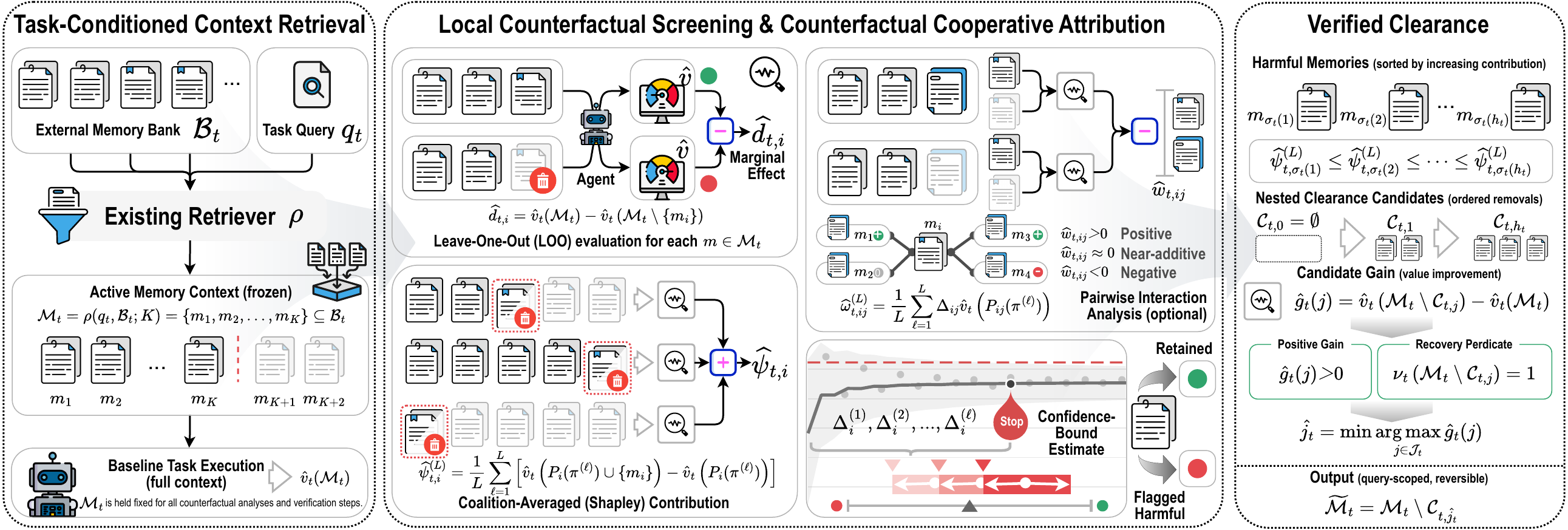}
\caption{Overall architecture of MeClear, combining local screening and cooperative Shapley attribution to output a query-scoped cleared context without modifying persistent storage.}
  \label{fig:maclear_framework}
\end{figure*}

\section{Method}
\label{sec:method}

As shown in Figure~\ref{fig:maclear_framework}, MeClear selectively suppresses harmful memories from the active context without modifying persistent storage. It operates across four stages: adaptive retrieval, local screening, cooperative attribution, and verified clearance. The architecture introduces no stage specific tuning coefficients, governed transparently by context capacity, permutation budget, and a single tolerance threshold.

\subsubsection{Adaptive Task-Conditioned Context Retrieval}
\label{subsubsec:context_retrieval}

For query $q_t$, the external memory system retrieves an active context $\mathcal{M}_t$. The term adaptive denotes the query-dependent nature of context selection rather than a re-trained retrieval model, as MeClear operates as a post-retrieval layer without modifying the parameters or indexing of $\rho$. This separation prevents semantic similarity from being conflated with downstream utility, as the retriever estimates query-record compatibility whereas $v_t(S)$ evaluates agent behavior under exposed coalition $S$. Consequently, memories with comparable retrieval scores can produce distinct task outcomes due to temporal drift, factual conflicts, redundancy, or contextual interaction:
\begin{equation}
\mathcal{M}_t = \rho(q_t, \mathcal{B}_t; K).
\label{eq:active_context_retrieval}
\end{equation}

To isolate the behavioral impact of memory suppression from retrieval volatility, MeClear freezes $\mathcal{M}_t$ prior to attribution, executing all subsequent interventions on subsets of this fixed context without re-invoking the retriever. For each coalition $S\subseteq\mathcal{M}_t$, the empirical value $\widehat v_t(S)$ is obtained by executing the task agent on $S$ under a fixed evaluator and cached by memory identity. Evaluating the complete context value $\widehat v_t(\mathcal{M}_t)$ first establishes a shared reference baseline for all subsequent counterfactual comparisons.

\subsubsection{Local Counterfactual Screening}
\label{subsubsec:loo_screening}

MeClear first establishes a local contribution profile using Leave-One-Out (LOO) counterfactual interventions. For each retrieved record $m_i\in\mathcal{M}_t$, the empirical local effect is defined as:
\begin{equation}
\widehat d_{t,i}
=
\widehat v_t(\mathcal{M}_t)
-
\widehat v_t
\left(
\mathcal{M}_t\setminus\{m_i\}
\right).
\label{eq:local_effect}
\end{equation}
A negative value of $\widehat d_{t,i}$ indicates that suppressing $m_i$ improves execution performance on the complete context, whereas a positive value reflects a reduction in task utility upon removal. Values approaching zero remain inconclusive because auxiliary or substitutable evidence may mask the underlying effect. Consequently, MeClear utilizes LOO as an interpretable local screening signal rather than a definitive harmfulness criterion.
The structural limitation of single-record deletion becomes particularly pronounced under redundant harmful evidence, where multiple records independently induce task degradation.

\begin{theorem}[Redundant-Harm Blind Spot]
\label{thm:redundant_harm}
Let $m_i$ and $m_j$ be substitutable harmful memories whose joint effect on the task value function satisfies:
\begin{equation}
\begin{aligned}
v_t(S) = {} & u_t\left(S\setminus\{m_i,m_j\}\right) \\
& - \Delta_t \mathbf{1}\left[S\cap\{m_i,m_j\}\neq\emptyset\right], \quad \Delta_t>0,
\end{aligned}
\label{eq:redundant_game}
\end{equation}
where $u_t$ is independent of $m_i$ and $m_j$. If both records are present in $\mathcal{M}_t$, then:
\begin{equation}
d_{t,i} = d_{t,j} = 0,
\qquad
\psi_{t,i} = \psi_{t,j} = -\frac{\Delta_t}{2}.
\label{eq:redundant_harm_result}
\end{equation}
\end{theorem}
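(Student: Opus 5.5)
The plan is to compute both quantities directly from the structured game \eqref{eq:redundant_game}. For the Shapley values I will use linearity of \eqref{eq:memory_contribution} in $v_t$ together with the permutation interpretation stated after it. Throughout I read $d_{t,i}$ as the population version of \eqref{eq:local_effect}, namely $d_{t,i}=v_t(\mathcal{M}_t)-v_t(\mathcal{M}_t\setminus\{m_i\})$ with exact values. I read ``$u_t$ independent of $m_i,m_j$'' as saying that $u_t$ is a set function on subsets of $\mathcal{M}_t\setminus\{m_i,m_j\}$.

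\textbf{LOO part.} Both $\mathcal{M}_t$ and $\mathcal{M}_t\setminus\{m_i\}$ contain $m_j$, so the indicator in \eqref{eq:redundant_game} equals $1$ for both sets. Removing $\{m_i,m_j\}$ from either set yields the same set $\mathcal{M}_t\setminus\{m_i,m_j\}$, so the $u_t$ terms coincide. Subtracting gives $d_{t,i}=0$, and the symmetric argument with the roles of $m_i$ and $m_j$ exchanged gives $d_{t,j}=0$. This is the masking phenomenon: each harmful record keeps the penalty active when the other is deleted.

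\textbf{Shapley part.} I write $v_t=v^{(1)}+v^{(2)}$ with
\[
v^{(1)}(S)=u_t(S\setminus\{m_i,m_j\}), \qquad v^{(2)}(S)=-\Delta_t\mathbf{1}[S\cap\{m_i,m_j\}\neq\emptyset].
\]
Since \eqref{eq:memory_contribution} is a weighted sum of marginal differences, it is linear in the game, so $\psi_{t,i}=\psi^{(1)}_i+\psi^{(2)}_i$.

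For $v^{(1)}$, take any $S\subseteq\mathcal{M}_t\setminus\{m_i\}$. Then $(S\cup\{m_i\})\setminus\{m_i,m_j\}=S\setminus\{m_j\}=S\setminus\{m_i,m_j\}$. Hence every marginal term of $m_i$ vanishes, $m_i$ is a null player, and $\psi^{(1)}_i=0$.

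For $v^{(2)}$, I use the random-ordering interpretation. The marginal contribution of $m_i$ to its predecessor coalition $S$ equals $-\Delta_t$ exactly when $m_j\notin S$, and equals $0$ otherwise. Under a uniformly random ordering of the $K$ memories, $m_j$ follows $m_i$ with probability $1/2$. Therefore $\psi^{(2)}_i=-\Delta_t/2$.

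As a cross-check, efficiency gives $\psi^{(2)}_i+\psi^{(2)}_j=v^{(2)}(\mathcal{M}_t)-v^{(2)}(\emptyset)=-\Delta_t$. Symmetry of $m_i$ and $m_j$ in $v^{(2)}$ then splits this sum equally. The same computation with the roles exchanged yields $\psi_{t,j}=-\Delta_t/2$.

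\textbf{Main obstacle.} The only delicate step is the null-player claim for $v^{(1)}$. It relies on the precise meaning of ``$u_t$ independent of $m_i$ and $m_j$,'' which I will state explicitly as the reading above, so that $u_t$ sees only $S\setminus\{m_i,m_j\}$. The rest is a short probability computation, which I will write out as the sum $\sum_{S\not\ni m_j}\frac{|S|!(K-|S|-1)!}{K!}=\tfrac12$ for completeness.
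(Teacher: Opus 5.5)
Your proposal is correct and follows essentially the same route as the paper. The LOO part is identical, and the paper's Shapley step is the same random-ordering argument: the marginal is $-\Delta_t$ when $m_i$ precedes $m_j$ and $0$ otherwise, each with probability $1/2$. The only difference is that the paper treats the $u_t$ term implicitly, whereas your explicit linearity and null-player decomposition, together with the check that the coefficients sum to $1/2$, just makes that step more careful.
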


Deleting either record leaves the other active, maintaining the degradation penalty in Equation~\eqref{eq:redundant_game} and yielding zero LOO effect. Under random permutations, $m_i$ precedes $m_j$ with probability $1/2$, yielding expected contribution $\psi_{t,i} = -\Delta_t/2$ as shown in Appendix A. Theorem~\ref{thm:redundant_harm} demonstrates that a zero LOO effect does not imply zero cooperative harm. While LOO measures utility only locally, cooperative attribution evaluates marginal contributions across coalitions where redundant substitutes are absent. MeClear therefore retains the local profile as a diagnostic reference while anchoring harmful memory selection on coalition-averaged contributions.

\begin{figure*}[t]
    \centering
    \includegraphics[width=1\linewidth]{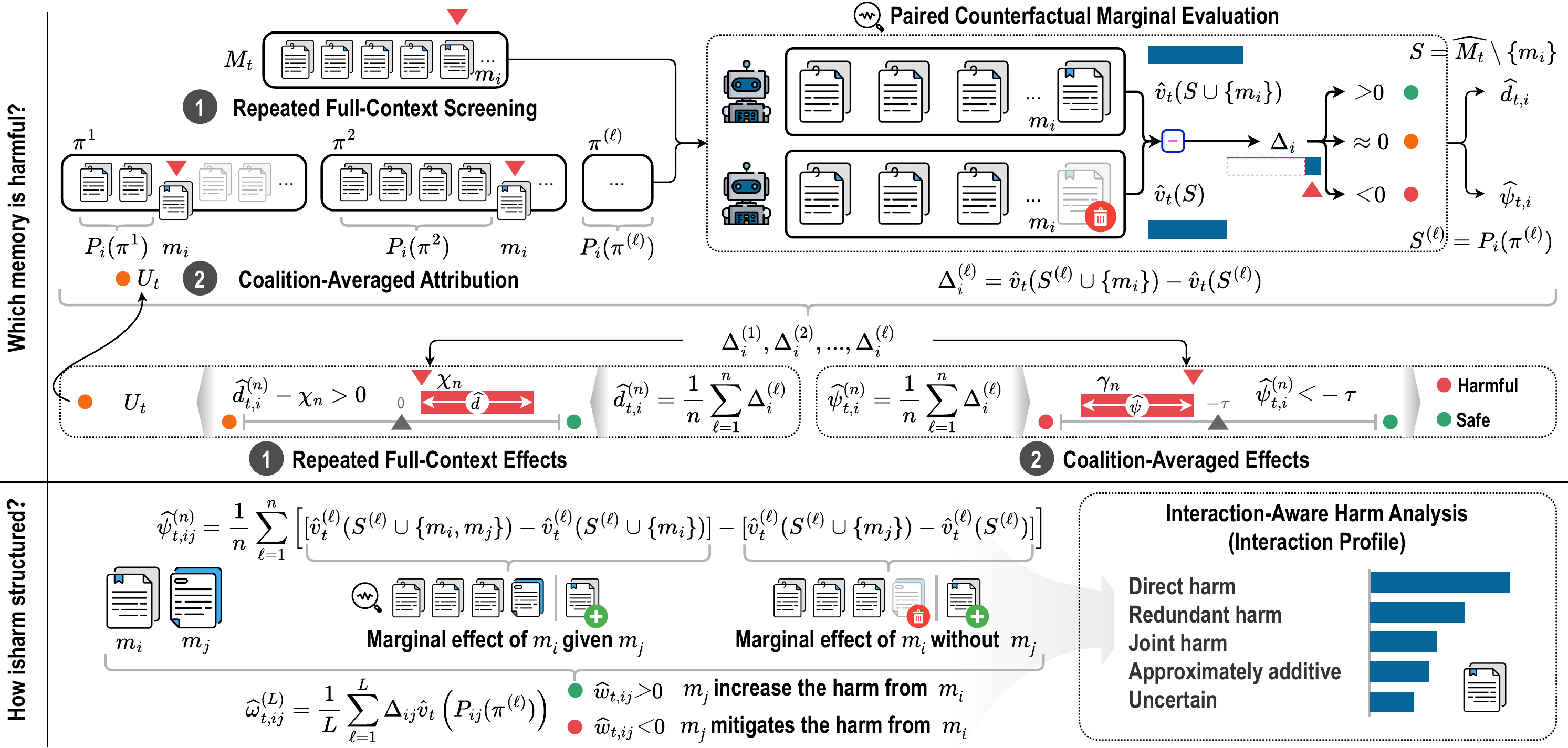}
    \caption{Counterfactual attribution and interaction profiling in MeClear. \textbf{Top:} Individual memory harm attribution. \textbf{Bottom:} Pairwise interaction analysis diagnosing direct, redundant, and joint memory harm dependencies.}
    \label{fig:harm_attribution_analysis}
\end{figure*}

\subsubsection{Counterfactual Cooperative Attribution}
\label{subsubsec:cooperative_attribution}

MeClear next estimates memory contributions through the equivalent permutation representation. As depicted in Figure~\ref{fig:harm_attribution_analysis}, individual memory utility is quantified via permutation sampled Shapley contribution estimation across contextual permutations, while pairwise interaction profiles diagnose structural dependencies among evidence. Let $\Pi_t$ denote the set of all permutations of $\mathcal{M}_t$. For $\pi\in\Pi_t$, let $P_i(\pi)$ contain the memories appearing before $m_i$. The exact contribution is:
\begin{equation}
\psi_{t,i}
=
\frac{1}{|\Pi_t|}
\sum_{\pi\in\Pi_t}
\left[
v_t
\left(
P_i(\pi)\cup\{m_i\}
\right)
-
v_t
\left(
P_i(\pi)
\right)
\right].
\label{eq:permutation_contribution}
\end{equation}
Unlike single-record screening, Equation~\eqref{eq:permutation_contribution} evaluates the marginal contribution of $m_i$ across diverse predecessor coalitions, exposing negative utility masked by substitution or redundancy in the full context.
Because enumerating all permutations is computationally infeasible for non-trivial context size $K$, MeClear draws $L$ independent permutations $\pi^{(1)},\ldots,\pi^{(L)}$ and computes the sample-average estimate:
\begin{equation}
\widehat{\psi}_{t,i}^{(L)}
=
\frac{1}{L}
\sum_{\ell=1}^{L}
\left[
\widehat v_t
\left(
P_i(\pi^{(\ell)})\cup\{m_i\}
\right)
-
\widehat v_t
\left(
P_i(\pi^{(\ell)})
\right)
\right].
\label{eq:sampled_shapley}
\end{equation}
Each permutation is sampled over the context $\mathcal{M}_t$, ensuring beneficial, neutral, and harmful memories co-occur in predecessor coalitions to preserve the underlying game structure.

\begin{proposition}[Finite-Sample Cooperative Estimation]
\label{prop:finite_sample_attribution}
For independent uniformly sampled permutations, the estimator in Equation~\eqref{eq:sampled_shapley} is unbiased:
\begin{equation}
\mathbb{E}
\left[
\widehat{\psi}_{t,i}^{(L)}
\right]
=
\widetilde{\psi}_{t,i},
\label{eq:sampled_unbiasedness}
\end{equation}
and for any $\varepsilon>0$, satisfies the concentration bound:
\begin{equation}
\Pr
\left(
\max_{1\leq i\leq K}
\left|
\widehat{\psi}_{t,i}^{(L)}
-
\widetilde{\psi}_{t,i}
\right|
\geq
\varepsilon
\right)
\leq
2K
\exp
\left(
-\frac{L\varepsilon^2}{2}
\right).
\label{eq:sampled_uniform_bound}
\end{equation}
\end{proposition}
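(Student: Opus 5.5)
Here $\widetilde{\psi}_{t,i}$ denotes the Shapley contribution of the \emph{empirical} game $\widehat v_t$, namely Equation~\eqref{eq:permutation_contribution} with $v_t$ replaced by the cached estimates $\widehat v_t$. This is the natural target: conditional on the frozen context $\mathcal{M}_t$ and the cached evaluations, the only randomness in Equation~\eqref{eq:sampled_shapley} comes from the permutations. Define, for each $\ell$, the single-permutation marginal
\begin{equation}
X_i^{(\ell)} = \widehat v_t\bigl(P_i(\pi^{(\ell)})\cup\{m_i\}\bigr)-\widehat v_t\bigl(P_i(\pi^{(\ell)})\bigr),
\end{equation}
so that $\widehat{\psi}_{t,i}^{(L)}=\frac{1}{L}\sum_{\ell}X_i^{(\ell)}$. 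Caching by memory identity means a coalition is always mapped to the same value $\widehat v_t(S)$. Hence $\widehat v_t$ is a fixed deterministic function on $2^{\mathcal{M}_t}$ once evaluated, and each $X_i^{(\ell)}$ is a deterministic function of $\pi^{(\ell)}$ alone.

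\textbf{Unbiasedness.} Since $\pi^{(\ell)}$ is uniform on $\Pi_t$, we have
\begin{equation}
\mathbb{E}[X_i^{(\ell)}]=\frac{1}{|\Pi_t|}\sum_{\pi\in\Pi_t}\bigl[\widehat v_t(P_i(\pi)\cup\{m_i\})-\widehat v_t(P_i(\pi))\bigr]=\widetilde{\psi}_{t,i}.
\end{equation}
This is exactly the permutation representation of Equation~\eqref{eq:permutation_contribution} applied to $\widehat v_t$. Linearity of expectation then gives Equation~\eqref{eq:sampled_unbiasedness}. If one wants $\widetilde\psi$ expressed in subset form, the paper's remark applies: the coefficient $|S|!(K-|S|-1)!/K!$ is the probability that $P_i(\pi)=S$.

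\textbf{Concentration.} Because $\widehat v_t$ takes values in $[0,1]$, each $X_i^{(\ell)}\in[-1,1]$, an interval of length $2$. For a fixed $i$, the variables $X_i^{(1)},\ldots,X_i^{(L)}$ are i.i.d.\ since the permutations are independent. Hoeffding's inequality then yields
\begin{equation}
\Pr\bigl(|\widehat{\psi}_{t,i}^{(L)}-\widetilde{\psi}_{t,i}|\ge\varepsilon\bigr)\le 2\exp\!\left(-\frac{2L^2\varepsilon^2}{L\cdot 2^2}\right)=2\exp\!\left(-\frac{L\varepsilon^2}{2}\right).
\end{equation}
A union bound over $i=1,\ldots,K$ gives Equation~\eqref{eq:sampled_uniform_bound}. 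No independence across different $i$ is needed. This matters because the $K$ estimates share the same permutations and are correlated.

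\textbf{Main obstacle.} The substantive point is justifying that $\widehat v_t$ is a fixed $[0,1]$-valued function, so that the $X_i^{(\ell)}$ are genuinely i.i.d.\ and bounded. If agent executions were stochastic and re-run per permutation, additional evaluation noise would enter. I would handle this by conditioning on the cache, or equivalently on the $\sigma$-algebra generated by all coalition evaluations, and arguing the bound holds conditionally and hence unconditionally. This also pins down why the target is $\widetilde\psi$ rather than the true $\psi$. Boundedness follows because $r_t\in[0,1]$ makes empirical averages lie in $[0,1]$.
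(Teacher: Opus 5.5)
Your proposal is correct and follows essentially the same route as the paper's Appendix B. You condition on the fixed empirical game $\widehat v_t$ and identify $\mathbb{E}[X_i^{(\ell)}]$ with the empirical Shapley value $\widetilde{\psi}_{t,i}$ via uniformly random permutations; then you apply Hoeffding to the $L$ i.i.d.\ marginals in $[-1,1]$ and take a union bound over the $K$ memories without needing independence across memories. Your remark on conditioning on the cache simply makes explicit the paper's standing assumption that $\widehat v_t$ is fixed.
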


The summands in Equation~\eqref{eq:sampled_shapley} represent independent bounded observations of marginal contribution. Applying Hoeffding's inequality with a union bound across all $K$ records yields the concentration result in Appendix B. Proposition~\ref{prop:finite_sample_attribution} characterizes the approximation error introduced by finite sampling for a fixed empirical game without assuming the underlying model is an unbiased population estimator.
MeClear constructs the operational harmful set by thresholding estimated contributions with tolerance $\tau$:
\begin{equation}
\widehat{\mathcal{H}}_t
=
\left\{
m_i\in\mathcal{M}_t
\;\middle|\;
\widehat{\psi}_{t,i}^{(L)}<-\tau
\right\}.
\label{eq:operational_harmful_set}
\end{equation}

Using the unified tolerance $\tau$ avoids introducing additional hyperparameter thresholds into attribution.
To characterize higher-order structural dependencies, MeClear further estimates pairwise non-additivity. Let $\Delta_{ij}\widehat v_t(S) = \widehat v_t(S\cup\{m_i,m_j\}) - \widehat v_t(S\cup\{m_i\}) - \widehat v_t(S\cup\{m_j\}) + \widehat v_t(S)$. For predecessor set $P_{ij}(\pi)$ preceding both $m_i$ and $m_j$, the sampled interaction effect is:
\begin{equation}
\widehat{\omega}_{t,ij}^{(L)}
=
\frac{1}{L}
\sum_{\ell=1}^{L}
\Delta_{ij}\widehat v_t
\left(
P_{ij}(\pi^{(\ell)})
\right),
\qquad
i\neq j .
\label{eq:sampled_interaction}
\end{equation}

The resulting interaction profile diagnoses non-additive structures such as substitution, complementarity, and joint interference, serving as a structural behavioral diagnostic rather than a semantic classifier.

\begin{table*}[t]
  \centering
  \small
  \setlength{\tabcolsep}{3.5pt}
  \renewcommand{\arraystretch}{1.15}

  \begin{minipage}{\textwidth}
    \textbf{Panel A. Case construction and causal signature}\par\smallskip

    \begin{tabularx}{\textwidth}{
      @{}
      c
      l
      >{\raggedright\arraybackslash}X
      >{\raggedright\arraybackslash}X
      >{\centering\arraybackslash}p{3.1cm}
      >{\raggedright\arraybackslash}X
      @{}
    }
      \toprule
      Case & Fault & QA $\rightarrow$ Gold & Injected pair $\mathcal{M}$
           & Causal signature $\mathbf{c}$ & Mechanism \\
      \midrule
      A & Temporal
        & John in Italy: which month? $\rightarrow$ Dec. 2023
        & Two paraphrases: Jan. 2023
        & $(1.00,\,0.50,\,0.50,\,0.50)$
        & Flat plateau; each fault masks the other's local effect \\
      \addlinespace[2pt]
      B & Conflict
        & Dogs' reaction to snow? $\rightarrow$ Confused
        & Two paraphrases: excited/thrilled
        & $(1.00,\,0.00,\,0.00,\,0.00)$
        & Each fault independently reaches the failure floor \\
      \addlinespace[2pt]
      C & Factual
        & Effect on Calvin's songs? $\rightarrow$ Fresh vibe
        & Two paraphrases: warm, vintage tone
        & $(1.00,\,0.00,\,0.00,\,0.00)$
        & Floor masking; LOO instead selects background $G_3$ \\
      \bottomrule
    \end{tabularx}

    \vspace{0.6em}
    \textbf{Panel B. Attribution and deletion outcome}\par\smallskip

    \begin{tabularx}{\textwidth}{
      @{}
      c
      c
      *{5}{>{\centering\arraybackslash}X}
      @{}
    }
      \toprule
      Case
        & \makecell{MeClear contribution \\
          $\widehat{\psi}(F_1),\widehat{\psi}(F_2)$}
        & LOO
        & MeClear
        & ContextCite
        & ProxySPEX
        & \makecell{LLM \\ baseline} \\
      \midrule
      A
        & $(-0.094,-0.219)$
        & \makecell{$\varnothing$ \\ 0.50 / No}
        & \makecell{\textbf{$\mathcal{M}$ (Exact)} \\ \textbf{1.00 / Yes}}
        & \makecell{$\varnothing$ \\ 0.50 / No}
        & \makecell{$\{F_1\}$ \\ 0.50 / No}
        & \makecell{$\varnothing$ \\ 0.50 / No} \\
      \addlinespace[1pt]
      B
        & $(-0.281,-0.156)$
        & \makecell{$\varnothing$ \\ 0.00 / No}
        & \makecell{\textbf{$\mathcal{M}$ (Exact)} \\ \textbf{1.00 / Yes}}
        & \makecell{$\{F_1\}$ \\ 0.10 / No}
        & \makecell{$\{F_1\}$ \\ 0.10 / No}
        & \makecell{$\mathcal{M}{\cup}\{G_1\}$ \\ 0.00 / No} \\
      \addlinespace[1pt]
      C
        & $(-0.206,-0.075)$
        & \makecell{$\{G_3\}$ \\ 0.10 / No}
        & \makecell{\textbf{$\mathcal{M}$ (Exact)} \\ \textbf{1.00 / Yes}}
        & \makecell{$\{F_2\}$ \\ 0.10 / No}
        & \makecell{$\varnothing$ \\ 0.00 / No}
        & \makecell{\textbf{$\mathcal{M}$ (Exact)} \\ \textbf{1.00 / Yes}} \\
      \bottomrule
    \end{tabularx}

    \par\smallskip
    {\footnotesize
  \textit{Notes.}
  $\mathbf{c}=(v(G),v(G{+}F_1),v(G{+}F_2),v(G{+}M))$, with frozen top-5 context $G$ and error set $\mathcal{M}=\{F_1,F_2\}$. Cells report deleted set $H$, post-deletion task value, and Recovery status (requiring 2/2 correct trials). Hyperparameters: MeClear $L=16$, $\kappa=\tau=0.05$; ContextCite and ProxySPEX $B=32$; LLM baseline score threshold $\geq 0.5$. Cases are qualitative examples from qualifying redundant scenarios.
}
  \end{minipage}
  \caption{Representative redundant-conflict cases illustrating how MeClear
  resolves local masking.}
  \label{tab:meclear-redundant-cases}
\end{table*}

\subsubsection{Verified Query-Scoped Clearance}
\label{subsubsec:verified_clearance}

While the operational harmful set identifies candidate negative memories, suppressing all negatively attributed records may eliminate more context than necessary for task recovery. MeClear therefore approximates the ideal objective in Equation~\eqref{eq:clearance_objective} through an ordered sequence of query-scoped clearance candidates. Let $h_t=|\widehat{\mathcal{H}}_t|$. The records in $\widehat{\mathcal{H}}_t$ are ordered from the most negative to the least negative estimated contribution:
\begin{equation}
\widehat{\psi}_{t,\sigma_t(1)}^{(L)}
\leq
\widehat{\psi}_{t,\sigma_t(2)}^{(L)}
\leq
\cdots
\leq
\widehat{\psi}_{t,\sigma_t(h_t)}^{(L)} ,
\label{eq:harmful_order}
\end{equation}
where $\sigma_t$ is a permutation of indices in $\widehat{\mathcal{H}}_t$. A smaller index indicates a stronger negative contribution and thus a higher priority for removal.
MeClear constructs a nested family of clearance candidates along this priority chain:
\begin{equation}
\mathcal{C}_{t,j}
=
\left\{
m_{\sigma_t(1)},
m_{\sigma_t(2)},
\ldots,
m_{\sigma_t(j)}
\right\},
\qquad
0\leq j\leq h_t,
\label{eq:clearance_chain}
\end{equation}
with baseline $\mathcal{C}_{t,0}=\emptyset$. The candidate sequence satisfies $\mathcal{C}_{t,j-1}\subseteq\mathcal{C}_{t,j}$ and $|\mathcal{C}_{t,j}|=j$, reducing the combinatorial search space from $2^{h_t}$ arbitrary subsets to $h_t+1$ candidate contexts.
For each candidate, MeClear computes the empirical query-scoped gain:
\begin{equation}
\widehat{g}_t(j)
=
\widehat{v}_t
\left(
\mathcal{M}_t\setminus\mathcal{C}_{t,j}
\right)
-
\widehat{v}_t(\mathcal{M}_t).
\label{eq:query_scoped_gain}
\end{equation}

Because all candidates are masked directly from the frozen context $\mathcal{M}_t$ without re-invoking retrieval, any observed behavioral gain is strictly attributable to memory removal.
Let $\nu_t(S)\in\{0,1\}$ denote a fixed task-recovery predicate for query $q_t$. The set of admissible candidate indices $\mathcal{J}_t \subseteq \{0, 1, \dots, h_t\}$ is defined as:
\begin{equation}
\begin{aligned}
\mathcal{J}_t = \{0\} \cup \Big\{ & j \in \{1, \dots, h_t\} \;\Big|\; \\
& \widehat{g}_t(j) > 0, \; \nu_t\left(\mathcal{M}_t \setminus \mathcal{C}_{t,j}\right) = 1 \Big\}.
\end{aligned}
\label{eq:admissible_set}
\end{equation}

Including $j=0$ ensures $\mathcal{J}_t$ is non-empty. MeClear selects the admissible candidate maximizing verified empirical gain with minimal intervention cardinality:
\begin{equation}
\widehat{j}_t
=
\min \operatorname*{arg\,max}_{j \in \mathcal{J}_t} \widehat{g}_t(j).
\label{eq:query_scoped_selection}
\end{equation}

The maximization isolates candidates with peak empirical recovery, while the outer minimum breaks ties by choosing the smallest index and minimal intervention cardinality. The resulting query-conditioned context is $\widetilde{\mathcal{M}}_t = \mathcal{M}_t \setminus \mathcal{C}_{t,\widehat{j}_t}$, which serves as a query-scoped active context without permanently modifying the persistent memory bank $\mathcal{B}_t$.

\begin{theorem}[Query-Scoped Clearance Guarantee]
\label{thm:query_scoped_clearance}
For any fixed empirical evaluator $\widehat{v}_t$ and nested clearance family in Equation~\eqref{eq:clearance_chain}, the context selected by Equation~\eqref{eq:query_scoped_selection} guarantees empirical non-degradation:
\begin{equation}
\widehat{v}_t
\left(
\widetilde{\mathcal{M}}_t
\right)
\geq
\widehat{v}_t
\left(
\mathcal{M}_t
\right).
\label{eq:query_scoped_monotonicity}
\end{equation}
If $\widehat{j}_t>0$, the selected context satisfies verified recovery:
\begin{equation}
\nu_t
\left(
\widetilde{\mathcal{M}}_t
\right)
=
1.
\label{eq:query_scoped_recovery}
\end{equation}
Furthermore, $\mathcal{C}_{t,\widehat{j}_t}$ achieves minimal cardinality among all admissible candidates attaining the maximum gain.
\end{theorem}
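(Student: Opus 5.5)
The argument is a direct unwinding of the selection rule in Equation~\eqref{eq:query_scoped_selection} over the finite admissible set $\mathcal{J}_t$ of Equation~\eqref{eq:admissible_set}. Nothing probabilistic is needed: $\widehat v_t$ is treated as a fixed function on subsets of the frozen context $\mathcal{M}_t$, so every $\widehat g_t(j)$ is a fixed real number. I would first record two facts.

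\textbf{Two preliminary facts.}
\begin{itemize}
\item $\mathcal{J}_t$ is finite and nonempty, since $0\in\mathcal{J}_t$. Hence $\operatorname*{arg\,max}_{j\in\mathcal{J}_t}\widehat g_t(j)$ is a nonempty finite set and its minimum $\widehat j_t$ is well defined.
\item $\mathcal{C}_{t,0}=\emptyset$ gives $\widehat g_t(0)=\widehat v_t(\mathcal{M}_t)-\widehat v_t(\mathcal{M}_t)=0$. This uses the caching by memory identity, so that the same coalition always receives the same empirical value.
\end{itemize}

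\textbf{Non-degradation.} Since $\widehat j_t$ maximizes $\widehat g_t$ over $\mathcal{J}_t$ and $0\in\mathcal{J}_t$, we get $\widehat g_t(\widehat j_t)\ge \widehat g_t(0)=0$. Unfolding the definition of $\widehat g_t$ in Equation~\eqref{eq:query_scoped_gain} with $\widetilde{\mathcal{M}}_t=\mathcal{M}_t\setminus\mathcal{C}_{t,\widehat j_t}$ yields Equation~\eqref{eq:query_scoped_monotonicity}.

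\textbf{Verified recovery.} If $\widehat j_t>0$, then $\widehat j_t\in\mathcal{J}_t\setminus\{0\}$. By the defining condition of $\mathcal{J}_t$, membership of any positive index requires $\nu_t(\mathcal{M}_t\setminus\mathcal{C}_{t,\widehat j_t})=1$, which is Equation~\eqref{eq:query_scoped_recovery}. The same condition also gives strict improvement, $\widehat g_t(\widehat j_t)>0$.

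\textbf{Minimality.} Let $j^\ast\in\mathcal{J}_t$ be any admissible index with $\widehat g_t(j^\ast)=\max_{j\in\mathcal{J}_t}\widehat g_t(j)$. Then $j^\ast$ lies in the arg max, so $\widehat j_t\le j^\ast$ by the outer minimum. The nested construction in Equation~\eqref{eq:clearance_chain} gives $|\mathcal{C}_{t,j}|=j$, because the $\sigma_t(k)$ are distinct indices of a permutation. Hence $|\mathcal{C}_{t,\widehat j_t}|\le|\mathcal{C}_{t,j^\ast}|$. Nestedness further gives $\mathcal{C}_{t,\widehat j_t}\subseteq\mathcal{C}_{t,j^\ast}$, which is minimality by inclusion as well as by cardinality.

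\textbf{Main obstacle.} The only delicate point is stating the scope of the claim precisely. Minimality holds among admissible candidates in the nested family, not over all subsets of $\mathcal{H}_t$ as in Equation~\eqref{eq:clearance_objective}. Non-degradation likewise relies on the empirical evaluator being deterministic given the coalition, which the caching ensures. I would make both qualifications explicit, noting that with fresh stochastic re-evaluation the guarantee would hold only for the cached values used in the selection.
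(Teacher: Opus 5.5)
Your proposal is correct and follows the paper's proof in Appendix~C essentially step for step. As in the paper, $\widehat g_t(0)=0$ together with $0\in\mathcal{J}_t$ gives non-degradation, admissibility of a positive $\widehat j_t$ gives $\nu_t(\widetilde{\mathcal{M}}_t)=1$, and $|\mathcal{C}_{t,j}|=j$ combined with the outer minimum gives minimal cardinality among maximum-gain admissible candidates; your additional remarks on inclusion-minimality and on the restricted scope (nested family only, fixed evaluator) match the paper's discussion in C.2.
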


Because the unchanged baseline $\mathcal{C}_{t,0}=\emptyset$ is always admissible with zero gain, the selected candidate cannot yield negative empirical gain, ensuring non-degradation as detailed in Appendix C. 

\section{Experiments}
\label{sec:experiments}

We evaluate MeClear using Kimi-k2.6 as the task agent and Qwen3.6-Flash as the judge evaluator across 745 causally verified test cases containing 1,115 fault records, synthesized from 368 clean queries over ten LoCoMo conversations. Under a hybrid retrieval budget $K=5$, memory faults span direct conflicts with $n=375$ and $|M|=1$, redundant conflicts with $n=285$ and $|M|=2$, and joint interactions with $n=85$ and $|M|=2$. We compare MeClear configured with $\tau=\kappa=0.05$ and sampling budget $L=16$ against Leave-One-Out, ContextCite, and ProxySPEX across four metrics: Target Recall at $|M|$, Complete Set Recall, Exact Set Match, and Binary Task Recovery. Detailed dataset construction, fault verification protocols, and metric definitions are provided in Appendix~D.

\subsection{Overall Attribution Accuracy and Task Recovery}
\label{subsec:overall_performance}

MeClear outperforms all baselines in attribution precision and downstream task recovery. As shown in Figure~\ref{fig:overall_performance}, MeClear achieves $85.9\%$ memory micro-recall, $47.0\%$ exact fault set identification, and $82.3\%$ binary task recovery, whereas LOO suffers from local blind spots, obtaining only $38.3\%$ recall and $56.8\%$ recovery. While ContextCite and ProxySPEX yield high recall, their inability to isolate complete harmful coalitions limits exact identification to $43.5\%$ and $54.2\%$. Paired comparisons across $n=745$ samples in Figure~\ref{fig:paired_advantage} confirm MeClear's dominance, yielding $212$ wins against $22$ losses over LOO and significant recovery gains of $+25.5\%$ over LOO, $+5.0\%$ over ProxySPEX, and $+38.7\%$ over Qwen3.5-Plus.

\subsection{Performance Across Structural Fault Mechanisms}
\label{subsec:mechanism_breakdown}

\begin{figure}[t]
    \centering
    \includegraphics[width=1\linewidth]{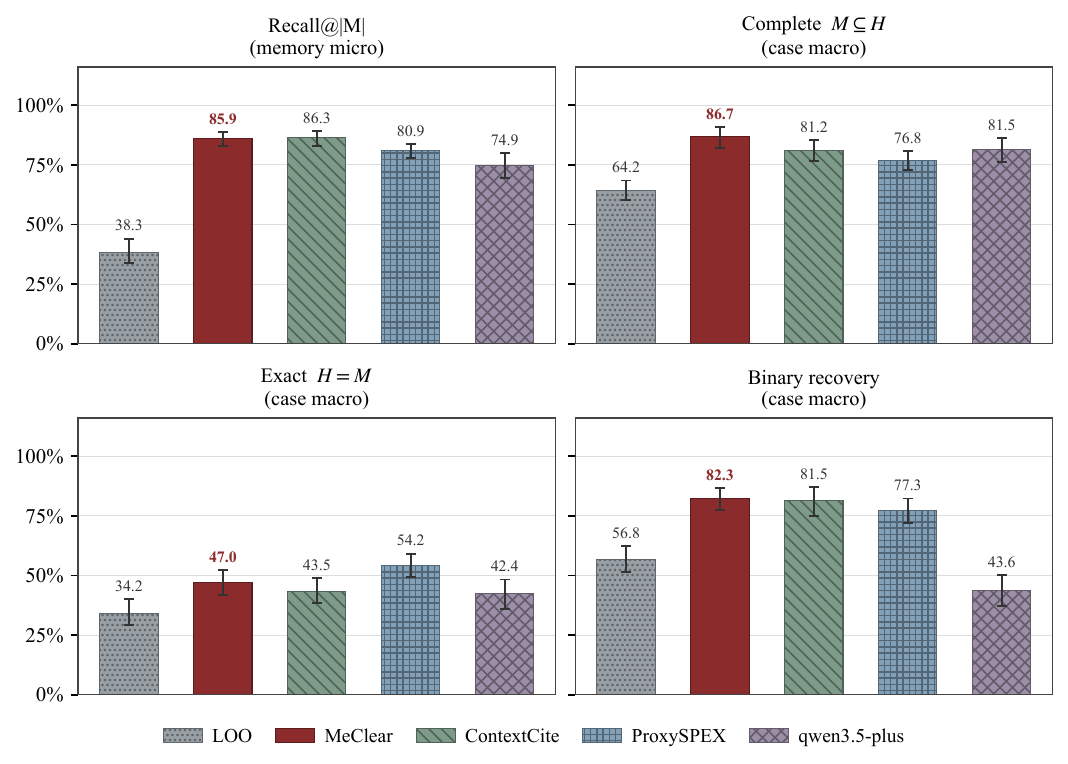}
    \caption{Overall attribution accuracy and task recovery across evaluation cases compared to baselines.}
    \label{fig:overall_performance}
\end{figure}

\begin{figure}[t]
    \centering
    \includegraphics[width=1\linewidth]{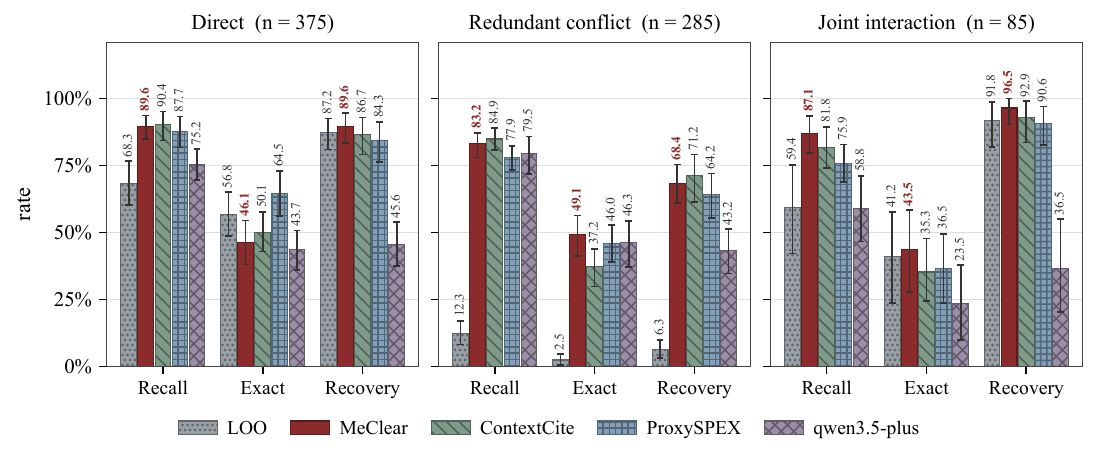}
    \caption{Performance across direct conflict, $n=375$, redundant masking, $n=285$, and joint interaction, $n=85$.}
    \label{fig:mechanism_breakdown}
\end{figure}

\begin{figure}[t]
    \centering
    \includegraphics[width=1\linewidth]{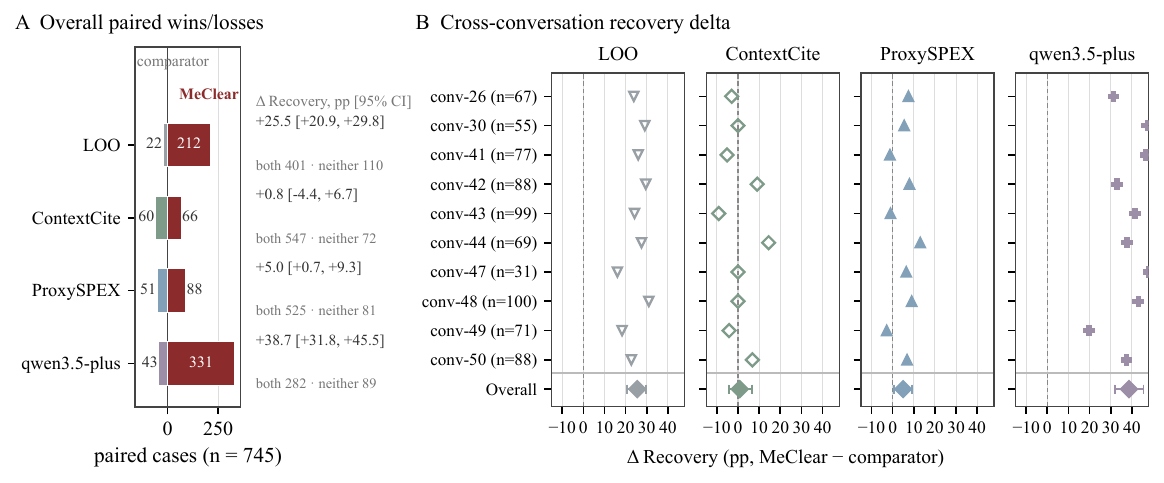}
    \caption{Paired comparative advantage and recovery gains. Panel A: head-to-head win/loss statistics across $n=745$ cases; Panel B: recovery differentials $\Delta\text{Recovery}$ with 95\% confidence intervals.}
    \label{fig:paired_advantage}
\end{figure}

\begin{figure}[t]
    \centering
    \includegraphics[width=1\linewidth]{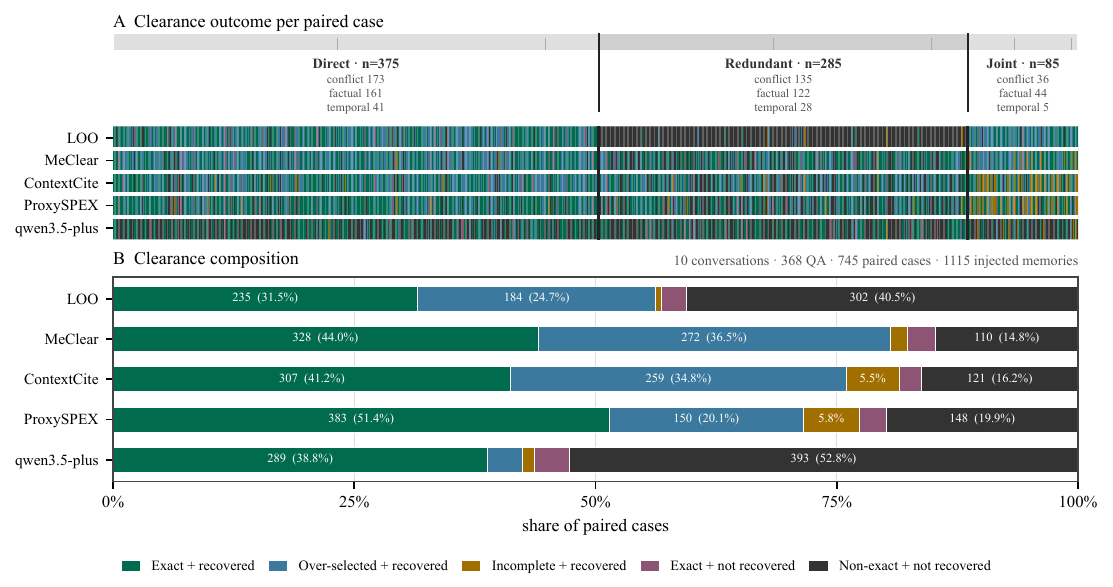}
    \caption{Query-scoped memory clearance and context composition. Panel A: clearance profiles by conflict type; Panel B: context composition across exact recovery, over-selection, and non-recovery.}
    \label{fig:clearance_outcomes}
\end{figure}

Stratifying evaluation across direct conflicts with $n=375$, redundant masking with $n=285$, and joint faults with $n=85$ reveals that performance gaps stem from multi-memory dependencies, as shown in Figure~\ref{fig:mechanism_breakdown}. In direct conflicts, LOO achieves $68.3\%$ recall and $87.2\%$ recovery, while MeClear reaches $83.2\%$ recall and $89.6\%$ recovery. Under redundant conflicts, local substitution causes LOO to collapse to $12.3\%$ recall and $6.3\%$ recovery. By evaluating marginal contributions across permutations, MeClear overcomes local masking, sustaining $83.2\%$ recall, $49.1\%$ exact identification, and $68.4\%$ recovery. Under joint non-additive interactions where LOO fails with $0.0\%$ exact match, MeClear achieves $87.1\%$ recall and $91.8\%$ recovery, demonstrating robust handling of higher-order evidence dependencies.

\subsection{Case Studies on Redundant Masking Resolution}
\label{subsec:case_studies}

Table~\ref{tab:meclear-redundant-cases} illustrates how MeClear resolves local masking in \textbf{representative} redundant fault scenarios. Under plateau ($\mathbf{c} = [1.00, 0.50, 0.50, 0.50]$) and floor ($\mathbf{c} = [1.00, 0.00, 0.00, 0.00]$) masking signatures, LOO fails by returning $\varnothing$ or removing benign records because single-fault removals leave substitute harm active. In contrast, MeClear accurately attributes negative cooperative contributions (e.g., $\widehat{\psi}(F_1) = -0.094$ and $\widehat{\psi}(F_2) = -0.219$ in Case A), successfully isolating the exact harmful set $\mathcal{M}$ and restoring the full task utility.

\subsection{Query-Scoped Clearance and Context Composition}
\label{subsec:clearance_composition}

Figure~\ref{fig:clearance_outcomes} confirms that MeClear \textbf{consistently} maximizes task recovery while minimizing context distortion. MeClear achieves an $80.5\%$ overall recovery rate ($44.0\%$ exact set and $36.5\%$ harmless over-selection) with only $14.8\%$ unrecovered failures. Conversely, LOO yields $40.5\%$ failures due to missed redundant faults, and direct baseline LLM filtering via Qwen3.5-Plus results in $52.8\%$ failures from over-truncation or hallucinated deletion. Thus, MeClear's verified selection rule effectively isolates minimal harmful coalitions while preserving the beneficial task context.

\section{Conclusion}
Unchecked memory accumulation threatens long-horizon LLM agents by turning external memory into an execution-degrading bottleneck of conflicting evidence. To break this impasse, we propose MeClear, a framework combining permutation-sampled Shapley attribution with a query-scoped verified clearance gateway. Extensive evaluations demonstrate that MeClear achieves an 85.9\% target recall and an 82.3\% task recovery rate, outperforming Leave-One-Out baselines by 25.5 percentage points. Ultimately, this work establishes coalition-aware context clearance as a foundational pillar for safe, non-destructive agent memory maintenance.

\bibliography{aaai2027}

\clearpage
\appendix

\section{Appendix}

Due to the strict page limit of the main paper, we provide supplementary theoretical analyses and experimental details in the appendix. Appendix~A proves the redundant-harm blind spot theorem and analyzes why LOO fails under redundant harmful memories. Appendix~B establishes the finite-sample guarantee of cooperative attribution estimation. Appendix~C provides the theoretical guarantee for query-scoped verified clearance. Appendix~D reports the complete experimental configurations, baseline settings, evaluation protocols, and additional analyses.

\section{Appendix A: Redundant-Harm Blind Spot}
\label{app:redundant_harm}

This appendix proves Theorem~\ref{thm:redundant_harm} and explains why a single-record Leave-One-Out (LOO) test can miss redundant harmful memories. Throughout Appendix~A, the task step $t$ and the retrieved context $\mathcal{M}_t$ are fixed. For the theoretical analysis, let
$d_{t,i}=v_t(\mathcal{M}_t)-v_t(\mathcal{M}_t\setminus\{m_i\})$
denote the exact counterpart of the empirical LOO effect
$\widehat d_{t,i}$ in Equation~\eqref{eq:local_effect}.

\subsection{A.1 Proof of Theorem~\ref{thm:redundant_harm}}

Assume that $m_i$ and $m_j$ satisfy the redundant-harm model in
Equation~\eqref{eq:redundant_game}:
\begin{equation}
\begin{aligned}
&v_t(S) ={} \\
& u_t\!\left(S\setminus\{m_i,m_j\}\right) - \Delta_t \mathbf{1}\!\left[S\cap\{m_i,m_j\}\neq\emptyset\right], \Delta_t>0,
\end{aligned}
\label{eq:app_redundant_game}
\end{equation}
where $u_t$ does not depend on $m_i$ or $m_j$, and both memories belong to $\mathcal{M}_t$.

\begin{proof}
Because both $m_i$ and $m_j$ are present in $\mathcal{M}_t$, the penalty in Equation~\eqref{eq:app_redundant_game} is active and
\begin{equation}
v_t(\mathcal{M}_t)
=
u_t\!\left(
\mathcal{M}_t\setminus\{m_i,m_j\}
\right)
-\Delta_t,
\label{eq:app_redundant_full}
\end{equation}
Removing $m_i$ leaves $m_j$ in the context, so the penalty remains:
\begin{equation}
v_t\!\left(
\mathcal{M}_t\setminus\{m_i\}
\right)
=
u_t\!\left(
\mathcal{M}_t\setminus\{m_i,m_j\}
\right)
-\Delta_t.
\label{eq:app_redundant_remove_i}
\end{equation}

Hence $d_{t,i}=0$. By symmetry, $d_{t,j}=0$.
We next consider the cooperative contribution. Shapley-based valuation measures a player's contribution by averaging its marginal effect over different coalitions
\cite{ghorbani2019datashapley,jia2019towards}.
Let $\pi$ be a uniformly random permutation of $\mathcal{M}_t$, and let $P_i(\pi)$ contain the memories appearing before $m_i$. Define
\begin{equation}
X_{t,i}(\pi)
=
v_t\!\left(P_i(\pi)\cup\{m_i\}\right)
-
v_t\!\left(P_i(\pi)\right).
\label{eq:app_marginal_contribution}
\end{equation}

If $m_i$ appears before $m_j$, then $m_j\notin P_i(\pi)$. Adding $m_i$ activates the penalty for the first time, so
$X_{t,i}(\pi)=-\Delta_t$.
If $m_j$ appears before $m_i$, the penalty is already active and
$X_{t,i}(\pi)=0$.
The two relative orders are equally likely, each with probability $1/2$. Therefore, using Equation~\eqref{eq:permutation_contribution},
\begin{equation}
\psi_{t,i}
=
\mathbb{E}_{\pi}[X_{t,i}(\pi)]
=
-\frac{\Delta_t}{2}.
\label{eq:app_psi_i}
\end{equation}
The argument gives $\psi_{t,j}=-\Delta_t/2$. Thus
$d_{t,i}=d_{t,j}=0$ while
$\psi_{t,i}=\psi_{t,j}=-\Delta_t/2$, proving
Theorem~\ref{thm:redundant_harm}.
\end{proof}

\subsection{A.2 Why LOO Misses Redundant Harm}

LOO and cooperative attribution evaluate a memory under different contexts. The LOO effect measures the performance change caused by removing one memory from the retrieved context. Specifically,
for memory $m_i$, it evaluates
\begin{equation}
d_{t,i}
=
v_t(\mathcal{M}_t)
-
v_t(\mathcal{M}_t\setminus\{m_i\}).
\label{eq:app_loo_context_difference}
\end{equation}

Under Equation~\eqref{eq:redundant_game}, removing $m_i$ leaves the
substitutable harmful memory $m_j$ active. Therefore, the degradation
term remains unchanged before and after the deletion:
\begin{equation}
v_t(\mathcal{M}_t)
=
v_t(\mathcal{M}_t\setminus\{m_i\}),
\label{eq:app_loo_redundant_equivalence}
\end{equation}
which directly leads to
$d_{t,i}=0$.
Cooperative attribution instead evaluates the marginal contribution of
$m_i$ over different predecessor coalitions. Following the permutation
formulation of Shapley-based valuation
\cite{ghorbani2019datashapley,jia2019towards}, the contribution of
$m_i$ is obtained by averaging
\begin{equation}
\Delta_{t,i}(\pi)
=
v_t(P_i(\pi)\cup\{m_i\})
-
v_t(P_i(\pi)),
\label{eq:app_cooperative_marginal_difference}
\end{equation}
where $P_i(\pi)$ denotes the set of memories preceding $m_i$ in permutation $\pi$. When $m_j$ is not included in $P_i(\pi)$, the marginal contribution of $m_i$ reflects its harmful effect because no redundant memory provides the same evidence. When $m_j$ is already included, the harmful evidence is already present and the additional contribution of $m_i$ is masked by redundancy. Therefore, coalition-level marginal evaluation can reveal harmful contributions that remain invisible under the single-memory deletion test. The result should be interpreted narrowly. Theorem~\ref{thm:redundant_harm} only establishes that a zero LOO effect does not exclude harmful cooperative contribution under redundant harmful memories. It does not imply that every memory with a zero or small LOO effect is harmful. Accordingly, MeClear interprets the LOO profile using tolerance $\kappa$: $\widehat d_{t,i}<-\kappa$ provides local evidence of harm, $|\widehat d_{t,i}|\leq\kappa$ is treated as locally inconclusive, and $\widehat d_{t,i}>\kappa$ provides local evidence of benefit. This local profile does not replace cooperative attribution; the operational harmful set remains determined by the sampled contribution in Equation~\eqref{eq:sampled_shapley} with harm tolerance $\tau$.

\subsection{A.3 Extension to Multiple Redundant Memories}

The same blind spot occurs for a redundant group with more than two memories.

\noindent\textbf{Corollary A.1 (Multi-Memory Redundant Harm).}
Let $\mathcal{R}=\{m_1,\ldots,m_r\}$ with $r\geq2$, and suppose
\begin{equation}
v_t(S)
=
u_t(S\setminus\mathcal{R})
-
\Delta_t
\mathbf{1}
\!\left[
S\cap\mathcal{R}\neq\emptyset
\right],
\qquad
\Delta_t>0,
\label{eq:app_multi_redundant_game}
\end{equation}
where $u_t$ does not depend on any memory in $\mathcal{R}$. If
$\mathcal{R}\subseteq\mathcal{M}_t$, then every
$m_i\in\mathcal{R}$ satisfies
$d_{t,i}=0$ and $\psi_{t,i}=-\Delta_t/r$.

\begin{proof}
Since $r\geq2$, deleting one memory $m_i$ leaves at least one member of $\mathcal{R}$ in the context. The penalty in
Equation~\eqref{eq:app_multi_redundant_game} therefore remains active, so $d_{t,i}=0$.
Along any permutation, the penalty $-\Delta_t$ is introduced exactly once: when the first member of $\mathcal{R}$ appears. Under a uniformly random permutation, each member of $\mathcal{R}$ is first with probability $1/r$. Thus the marginal contribution of $m_i$ is $-\Delta_t$ with probability $1/r$ and zero otherwise, which gives
$\psi_{t,i}=-\Delta_t/r$.
\end{proof}

Corollary~A.1 shows that the blind spot is not limited to a pair of redundant memories. Under the stated substitution model, LOO assigns zero effect to every member of the redundant group, while the cooperative allocation distributes the total penalty $-\Delta_t$ across the group.


\section{Appendix B: Finite-Sample Cooperative Estimation}
\label{app:finite_sample}

This appendix proves Proposition~\ref{prop:finite_sample_attribution}. The analysis conditions on the fixed empirical value function $\widehat v_t$ and isolates the approximation error caused by sampling permutations.

\subsection{B.1 Exact Shapley Value of the Empirical Game}

For the fixed empirical game, define
\begin{equation}
\widetilde{\psi}_{t,i}
=
\frac{1}{|\Pi_t|}
\sum_{\pi\in\Pi_t}
\left[
\widehat v_t
\!\left(
P_i(\pi)\cup\{m_i\}
\right)
-
\widehat v_t
\!\left(
P_i(\pi)
\right)
\right].
\label{eq:app_empirical_exact_shapley}
\end{equation}

Conditional on $\widehat v_t$, this quantity is fixed. The use of sampled permutations to approximate Shapley values follows the standard permutation view used in efficient Shapley estimation
\cite{ghorbani2019datashapley,jia2019towards}.
Equation~\eqref{eq:app_empirical_exact_shapley} is equivalent to
\begin{equation}
\begin{aligned}
&\widetilde{\psi}_{t,i} = \\
&\sum_{S\subseteq\mathcal{M}_t\setminus\{m_i\}}
\frac{|S|!(K-|S|-1)!}{K!}
\left[
\widehat v_t(S\cup\{m_i\})
-
\widehat v_t(S)
\right].
\end{aligned}
\label{eq:app_empirical_coalition_shapley}
\end{equation}

To see this, fix
$S\subseteq\mathcal{M}_t\setminus\{m_i\}$ with $|S|=s$.
There are $s!(K-s-1)!$ permutations for which $S$ is exactly the predecessor set of $m_i$: the $s$ memories in $S$ can appear before $m_i$ in any order, and the remaining $K-s-1$ memories can appear after $m_i$ in any order. Dividing by the total number $K!$ of permutations gives the coefficient in Equation~\eqref{eq:app_empirical_coalition_shapley}.
The empirical Shapley values also satisfy the efficiency property
\begin{equation}
\sum_{i=1}^{K}\widetilde{\psi}_{t,i}
=
\widehat v_t(\mathcal{M}_t)
-
\widehat v_t(\emptyset).
\label{eq:app_empirical_efficiency}
\end{equation}

\begin{proof}
Fix a permutation
$\pi=(m_{\pi_1},\ldots,m_{\pi_K})$
and define the prefix sets
$S_0=\emptyset$ and
$S_r=\{m_{\pi_1},\ldots,m_{\pi_r}\}$.
The marginal contributions along this permutation telescope:
\begin{equation}
\sum_{r=1}^{K}
\left[
\widehat v_t(S_r)-\widehat v_t(S_{r-1})
\right]
=
\widehat v_t(\mathcal{M}_t)-\widehat v_t(\emptyset).
\label{eq:app_telescoping}
\end{equation}
Averaging the left-hand side over all permutations gives
$\sum_i\widetilde{\psi}_{t,i}$, while the right-hand side is unchanged. This proves Equation~\eqref{eq:app_empirical_efficiency}. Replacing
$\widehat v_t$ with $v_t$ gives the population identity stated after
Equation~\eqref{eq:memory_contribution}.
\end{proof}

\subsection{B.2 Unbiasedness and Uniform Concentration}

For a sampled permutation $\pi^{(\ell)}$, define
\begin{equation}
X_{t,i}^{(\ell)}
=
\widehat v_t
\!\left(
P_i(\pi^{(\ell)})\cup\{m_i\}
\right)
-
\widehat v_t
\!\left(
P_i(\pi^{(\ell)})
\right).
\label{eq:app_sample_marginal}
\end{equation}

Then Equation~\eqref{eq:sampled_shapley} can be written as
$\widehat{\psi}_{t,i}^{(L)}
=L^{-1}\sum_{\ell=1}^{L}X_{t,i}^{(\ell)}$.
For each fixed $i$, the variables
$X_{t,i}^{(1)},\ldots,X_{t,i}^{(L)}$
are independent and identically distributed conditional on
$\widehat v_t$, because the permutations are sampled independently and uniformly. By Equation~\eqref{eq:app_empirical_exact_shapley},
$\mathbb{E}[X_{t,i}^{(\ell)}\mid\widehat v_t]
=\widetilde{\psi}_{t,i}$.
Linearity of expectation therefore gives
\begin{equation}
\mathbb{E}
\left[
\widehat{\psi}_{t,i}^{(L)}
\mid
\widehat v_t
\right]
=
\widetilde{\psi}_{t,i}.
\label{eq:app_conditional_unbiasedness}
\end{equation}

Since the empirical game is fixed in Proposition~\ref{prop:finite_sample_attribution}, this is the unbiasedness statement in Equation~\eqref{eq:sampled_unbiasedness}.
Because $\widehat v_t(S)\in[0,1]$ for every coalition $S$, each sampled marginal satisfies
$X_{t,i}^{(\ell)}\in[-1,1]$.
Hoeffding's inequality therefore gives, for every fixed $i$ and
$\varepsilon>0$,
\begin{equation}
\Pr
\left(
\left|
\widehat{\psi}_{t,i}^{(L)}
-
\widetilde{\psi}_{t,i}
\right|
\geq
\varepsilon
\;\middle|\;
\widehat v_t
\right)
\leq
2
\exp
\left(
-\frac{L\varepsilon^2}{2}
\right).
\label{eq:app_single_hoeffding}
\end{equation}

Applying the union bound over the $K$ memories yields
\begin{equation}
\Pr
\left(
\max_{1\leq i\leq K}
\left|
\widehat{\psi}_{t,i}^{(L)}
-
\widetilde{\psi}_{t,i}
\right|
\geq
\varepsilon
\;\middle|\;
\widehat v_t
\right)
\leq
2K
\exp
\left(
-\frac{L\varepsilon^2}{2}
\right),
\label{eq:app_uniform_hoeffding}
\end{equation}
which proves Equation~\eqref{eq:sampled_uniform_bound}.
The estimates for different memories need not be independent because one sampled permutation contributes to several memories. This does not affect the proof: Hoeffding's inequality is applied separately to the $L$ independent permutation samples for each fixed memory, and the final union bound does not require independence across memories.

\subsection{B.3 Consequences for Sampling, Thresholding, and Ranking}

Equation~\eqref{eq:app_uniform_hoeffding} gives a sufficient sample size for uniform accuracy. For any $\varepsilon>0$ and
$\delta\in(0,1)$, if
\begin{equation}
L
\geq
\frac{2}{\varepsilon^2}
\log
\left(
\frac{2K}{\delta}
\right),
\label{eq:app_sample_complexity}
\end{equation}
then, with probability at least $1-\delta$,
\begin{equation}
\max_{1\leq i\leq K}
\left|
\widehat{\psi}_{t,i}^{(L)}
-
\widetilde{\psi}_{t,i}
\right|
<
\varepsilon.
\label{eq:app_uniform_accuracy}
\end{equation}

This is a sufficient worst-case bound based only on
$\widehat v_t(S)\in[0,1]$; it is not a claim that the resulting value of $L$ is necessary or optimal in practice.
We next consider the threshold in Equation~\eqref{eq:operational_harmful_set}. Define the exact harmful set of the empirical game as
$\widetilde{\mathcal{H}}_t
=\{m_i\in\mathcal{M}_t\mid\widetilde{\psi}_{t,i}<-\tau\}$,
and assume that no exact empirical contribution lies on the threshold. Let
$\gamma_t=\min_i|\widetilde{\psi}_{t,i}+\tau|>0$.
Setting $\varepsilon=\gamma_t$ in
Equation~\eqref{eq:app_sample_complexity} gives
\begin{equation}
L
\geq
\frac{2}{\gamma_t^2}
\log
\left(
\frac{2K}{\delta}
\right).
\label{eq:app_threshold_complexity}
\end{equation}

Under this condition,
$\widehat{\mathcal{H}}_t=\widetilde{\mathcal{H}}_t$
with probability at least $1-\delta$, because no estimate can cross the threshold $-\tau$ when its error is smaller than $\gamma_t$.
The same argument controls the ordering in
Equation~\eqref{eq:harmful_order}. If
$\widetilde{\psi}_{t,j}-\widetilde{\psi}_{t,i}>2\varepsilon$
and the uniform estimation error is smaller than $\varepsilon$, then
$\widehat{\psi}_{t,i}^{(L)}<\widehat{\psi}_{t,j}^{(L)}$.
Thus, pairs of memories separated by more than twice the estimation error keep the same order.
These results apply to the fixed empirical game only. Proposition~\ref{prop:finite_sample_attribution} controls the difference between
$\widehat{\psi}_{t,i}^{(L)}$ and $\widetilde{\psi}_{t,i}$.
It does not by itself control the difference between
$\widehat{\psi}_{t,i}^{(L)}$ and the population contribution
$\psi_{t,i}$. Such a statement would require an additional assumption relating $\widehat v_t$ to $v_t$. The analysis above concerns the approximation error of cooperative attribution and therefore depends on the harm tolerance $\tau$. The LOO tolerance $\kappa$ is used only to interpret the preceding local counterfactual profile and does not alter the finite-sample guarantee for the sampled Shapley estimator.


\section{Appendix C: MeClear Procedure and Clearance Guarantee}
\label{app:query_scoped_clearance}

This appendix gives the full MeClear procedure using the definitions in the main paper and proves Theorem~\ref{thm:query_scoped_clearance}. All counterfactual evaluations are performed on the same frozen context $\mathcal{M}_t$, as specified in the main method.

\begin{algorithm}[t]
\caption{MeClear: Cooperative Attribution and Verified Clearance}
\label{alg:meclear}
\textbf{Input}: Query $q_t$, memory bank $\mathcal{B}_t$, retriever $\rho$,
evaluator $\widehat v_t$, recovery predicate $\nu_t$\\
\textbf{Parameters}: Retrieval budget $K$, permutation budget $L$,
screening tolerance $\kappa$, harm tolerance $\tau$\\
\textbf{Output}: Cleared context $\widetilde{\mathcal{M}}_t$

\begin{algorithmic}[1]

\STATE Retrieve and freeze $\mathcal{M}_t$ using Eq.~\eqref{eq:active_context_retrieval}
\STATE Evaluate and cache $\widehat v_t(\mathcal{M}_t)$
\STATE Compute LOO effects $\{\widehat d_{t,i}\}$ using Eq.~\eqref{eq:local_effect}
\STATE Classify the local LOO effects using tolerance $\kappa$
\STATE Estimate $\{\widehat{\psi}_{t,i}^{(L)}\}$ using Eq.~\eqref{eq:sampled_shapley}
\STATE Construct $\widehat{\mathcal H}_t$ using Eq.~\eqref{eq:operational_harmful_set}
\STATE Order $\widehat{\mathcal H}_t$ using Eq.~\eqref{eq:harmful_order}
\STATE Set $h_t\leftarrow|\widehat{\mathcal H}_t|$ and
$\mathcal{J}_t\leftarrow\{0\}$

\FOR{$j=1$ to $h_t$}
    \STATE Construct $\mathcal{C}_{t,j}$ using Eq.~\eqref{eq:clearance_chain}
    \STATE Compute $\widehat g_t(j)$ using Eq.~\eqref{eq:query_scoped_gain}
    \IF{$\widehat g_t(j)>0$ and
    $\nu_t(\mathcal{M}_t\setminus\mathcal{C}_{t,j})=1$}
        \STATE $\mathcal{J}_t\leftarrow\mathcal{J}_t\cup\{j\}$
    \ENDIF
\ENDFOR

\STATE Select $\widehat j_t$ using Eq.~\eqref{eq:query_scoped_selection}
\STATE $\widetilde{\mathcal{M}}_t
\leftarrow
\mathcal{M}_t\setminus\mathcal{C}_{t,\widehat j_t}$
\STATE \textbf{return} $\widetilde{\mathcal{M}}_t$

\end{algorithmic}
\end{algorithm}

\subsection{C.1 Proof of Theorem~\ref{thm:query_scoped_clearance}}
Algorithm~\ref{alg:meclear} follows the four stages described in the main paper: retrieval, local screening, cooperative attribution, and verified clearance. The LOO profile is interpreted using screening tolerance $\kappa$: strongly negative local effects provide direct evidence of harm, near-zero effects remain inconclusive, and strongly positive effects indicate local benefit. The final operational harmful set is determined by cooperative contributions with harm tolerance $\tau$. The interaction score in Equation~\eqref{eq:sampled_interaction} is used only for structural analysis and does not affect harmful-set construction or clearance decisions. Repeated coalition values may be cached without changing the sampled estimator in Equation~\eqref{eq:sampled_shapley}.
Recall that Equation~\eqref{eq:admissible_set} always includes the baseline index $0$, while every positive index must have both positive empirical gain and verified recovery. Equation~\eqref{eq:query_scoped_selection} selects the maximum-gain admissible index and uses the smallest index to break ties.

\begin{proof}
Since $0\in\mathcal{J}_t$ and $\mathcal{C}_{t,0}=\emptyset$, Equation~\eqref{eq:query_scoped_gain} gives
$\widehat g_t(0)=0$.
Therefore
$\max_{j\in\mathcal{J}_t}\widehat g_t(j)\geq0$.
Because $\widehat j_t$ is selected from this set of maximizers,
$\widehat g_t(\widehat j_t)\geq0$.
With
$\widetilde{\mathcal{M}}_t
=\mathcal{M}_t\setminus\mathcal{C}_{t,\widehat j_t}$,
we obtain
\begin{equation}
\widehat v_t(\widetilde{\mathcal{M}}_t)
\geq
\widehat v_t(\mathcal{M}_t),
\end{equation}
which proves Equation~\eqref{eq:query_scoped_monotonicity}.
Now suppose $\widehat j_t>0$. Since
$\widehat j_t\in\mathcal{J}_t$, the definition of
$\mathcal{J}_t$ requires
$\nu_t(\mathcal{M}_t\setminus\mathcal{C}_{t,\widehat j_t})=1$.
Hence
$\nu_t(\widetilde{\mathcal{M}}_t)=1$,
which proves Equation~\eqref{eq:query_scoped_recovery}.
Finally, let
\begin{equation}
\mathcal{J}_t^{\max}
=
\operatorname*{arg\,max}_{j\in\mathcal{J}_t}
\widehat g_t(j).
\label{eq:app_maximizer_indices}
\end{equation}
By Equation~\eqref{eq:query_scoped_selection},
$\widehat j_t=\min\mathcal{J}_t^{\max}$.
The nested construction in Equation~\eqref{eq:clearance_chain} satisfies
$|\mathcal{C}_{t,j}|=j$.
Thus, for every $j\in\mathcal{J}_t^{\max}$,
$|\mathcal{C}_{t,\widehat j_t}|
=\widehat j_t\leq j=|\mathcal{C}_{t,j}|$.
Therefore
$\mathcal{C}_{t,\widehat j_t}$
has minimum cardinality among the admissible candidates that attain the maximum empirical gain.
\end{proof}

The baseline also gives a simple fallback. If no positive index satisfies the gain and recovery conditions, then
$\mathcal{J}_t=\{0\}$ and MeClear returns the unchanged context. If at least one positive admissible candidate exists, its gain is strictly larger than the baseline gain, so the selected index is positive and the returned context satisfies the recovery predicate.

\subsection{C.2 Scope of the Guarantee and Evaluator Error}
The ideal objective in Equation~\eqref{eq:clearance_objective} considers all subsets of the harmful set, whereas Theorem~\ref{thm:query_scoped_clearance} is established over the nested family in Equation~\eqref{eq:clearance_chain} with $h_t+1$ candidates. It guarantees empirical non-degradation, verified recovery for positive admissible selections, and minimum cardinality among maximum-gain candidates within this family, but does not imply global optimality over all subsets of $\widehat{\mathcal{H}}_t$. The guarantee is conditional on the operational harmful set produced by the preceding attribution stage and does not assert that $\widehat{\mathcal{H}}_t$ equals the unknown exact harmful set $\mathcal{H}_t$. Global optimality would additionally require that a minimum-cardinality unrestricted maximizer be represented in the nested family and, when nonempty, satisfy the recovery predicate.
Suppose that, for the baseline and every candidate evaluated by MeClear,
\begin{equation}
\left|
\widehat v_t(S)-v_t(S)
\right|
\leq
\eta_t
\label{eq:app_uniform_evaluator_error}
\end{equation}
for some $\eta_t\geq0$. Let
$g_t(j)
=v_t(\mathcal{M}_t\setminus\mathcal{C}_{t,j})-v_t(\mathcal{M}_t)$.
Then the triangle inequality gives
\begin{equation}
\left|
\widehat g_t(j)-g_t(j)
\right|
\leq
2\eta_t.
\label{eq:app_gain_discrepancy}
\end{equation}

Since Theorem~\ref{thm:query_scoped_clearance} gives
$\widehat g_t(\widehat j_t)\geq0$, it follows that
$g_t(\widehat j_t)\geq-2\eta_t$.
Moreover, if
$\widehat g_t(\widehat j_t)>2\eta_t$,
then $g_t(\widehat j_t)>0$.
This last statement is conditional on
Equation~\eqref{eq:app_uniform_evaluator_error}. It does not turn the empirical guarantee into an unconditional population guarantee; it only shows how a known uniform evaluator error transfers to the selected gain. 

\begin{table}[t]
\centering
\small
\setlength{\tabcolsep}{4.2pt}
\renewcommand{\arraystretch}{1.10}
\begin{tabular}{l l}
\toprule
\textbf{Setting} & \textbf{Configuration} \\
\midrule
Dataset & LoCoMo, categories 1--4 \\
Conversations & 10 \\
Clean queries & 368 \\
Verified cases & 745 \\
Fault records & 1,115 \\
Direct conflicts & 375 \\
Redundant conflicts & 285 \\
Joint interactions & 85 \\
Memory framework & Mem0 2.0.12 + Qdrant \\
Memory records & 11,302 \\
Retrieval budget $K$ & 5 \\
\bottomrule
\end{tabular}
\caption{Dataset statistics and evaluation configuration.}
\label{tab:app_dataset_config}
\end{table}

\begin{table}[t]
\centering
\small
\renewcommand{\arraystretch}{1.12}

\textbf{Panel A: Cohort construction and paired-case funnel}\\[3pt]
\begin{tabular}{@{}llrr@{}}
\toprule
Stage & Unit & $N$ & Retention \\
\midrule
LoCoMo QA & QA & 1,540 & -- \\
Stable clean-correct QA & QA & 624 & 40.5\% \\
Injection proposals & Proposal & 7,986 & -- \\
Causal-valid proposals & Proposal & 1,178 & 14.8\% \\
Selected cases & Case & 747 & 63.4\% \\
\textbf{Final paired cases} & Paired case & \textbf{745} & \textbf{99.7\%} \\
\bottomrule
\end{tabular}

\vspace{8pt}

\textbf{Panel B: Final cohort by fault and interaction}\\[3pt]
\begin{tabular}{@{}lrrrr@{}}
\toprule
Fault type & Direct & Redundant & Joint & All \\
\midrule
Explicit conflict & 173 & 135 & 36 & 344 \\
Precise factual & 161 & 122 & 44 & 327 \\
Precise temporal & 41 & 28 & 5 & 74 \\
\midrule
\textbf{All faults} & \textbf{375} & \textbf{285} & \textbf{85} & \textbf{745} \\
\bottomrule
\end{tabular}

\caption{Cohort construction and final paired-case coverage. Panel A summarizes the main data-selection milestones, and Panel B reports the final 745 cases by interaction structure and fault type.}
\label{tab:supplement_cohort}
\end{table}

\begin{figure*}[t]
    \centering
    \includegraphics[width=\linewidth]{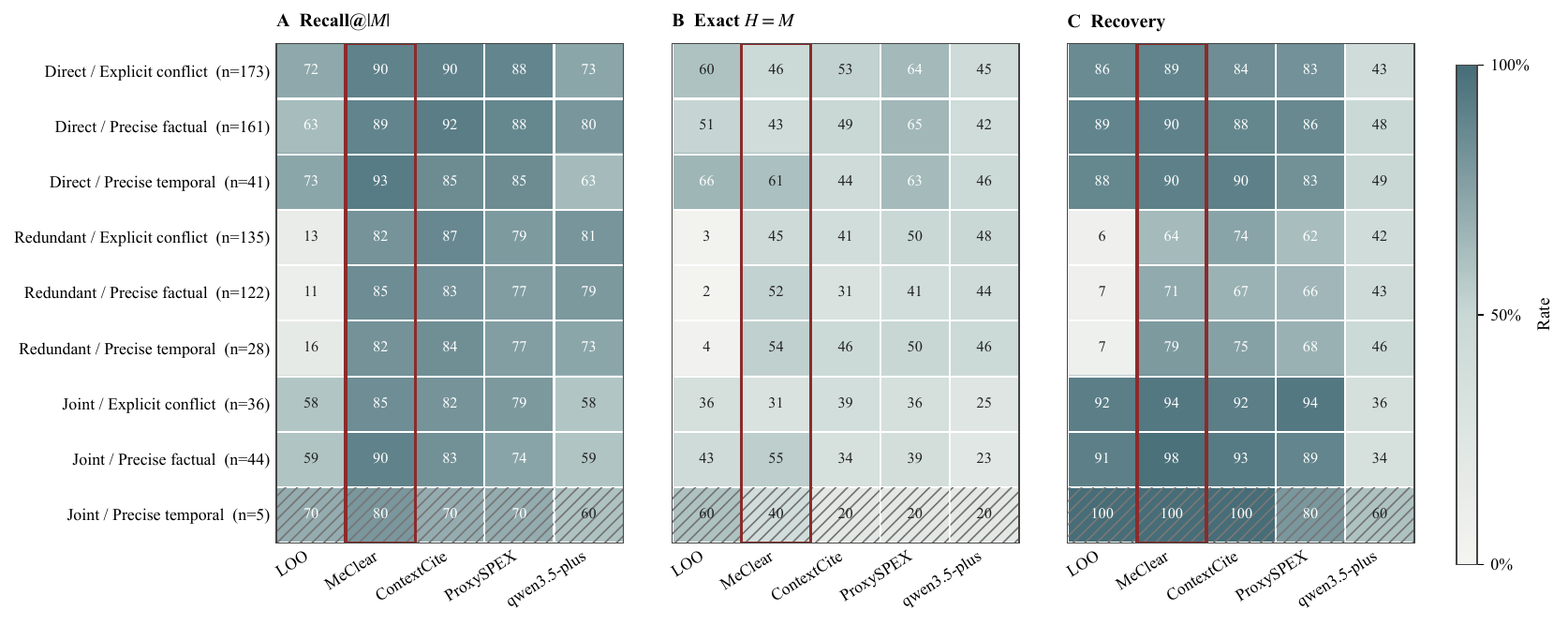}
\caption{Interaction-by-fault performance heatmap for five memory-clearance methods. Panels show target-micro Recall@$|M|$, case-macro Exact Set Match, and Binary Task Recovery across nine interaction-by-fault cells. Cell annotations report percentages and case counts; the MeClear column is outlined, and cells with $n<10$ are hatched. All panels share a 0--100\% scale with consistent visual encoding for comparison across fault structures.}
    \label{fig:interaction_fault_heatmap}
\end{figure*}

\begin{table}[t]
\centering
\small
\setlength{\tabcolsep}{4.0pt}
\renewcommand{\arraystretch}{1.10}
\begin{tabular}{l l}
\toprule
\textbf{Setting} & \textbf{Value} \\
\midrule
MeClear retrieval budget $K$ & 5 \\
MeClear permutation budget $L$ & 16 \\
MeClear checkpoints & 4, 8, 16 \\
MeClear screening tolerance $\kappa$ & 0.05 \\
MeClear harm tolerance $\tau$ & 0.05 \\
Task-value range & $[0,1]$ \\
Task agent & Kimi-k2.6 \\
Judge evaluator & Qwen3.6-Flash \\
Memory construction & Moonshot-v1-32k \\
Fault generation & Qwen3.7-Plus \\
LOO threshold & $\widehat d_{t,i}<-0.05$ \\
ContextCite budget / $\alpha$ & 32 / 0.01 \\
ProxySPEX budget / order & 32 / 2 \\
Direct LLM & Qwen3.5-Plus \\
Direct LLM threshold & $\geq0.5$ \\
\bottomrule
\end{tabular}
\caption{Main method, model, and baseline settings.}
\label{tab:app_parameter_config}
\end{table}

\begin{figure}[t]
    \centering
    \includegraphics[width=\linewidth]{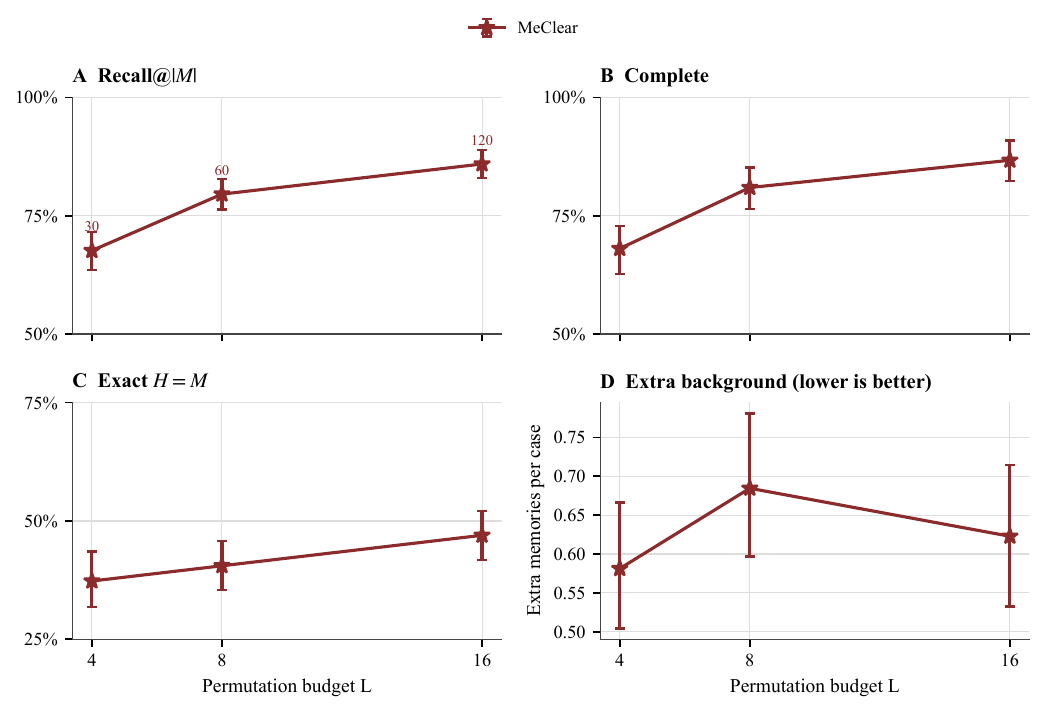}
\caption{MeClear sensitivity to permutation budgets $L=4$, $8$, and $16$ on 745 cases. Results report Recall@$|M|$, Complete Set Recall, Exact Set Match, and extra-background selection with 95\% two-level cluster-bootstrap intervals.}
    \label{fig:permutation_sensitivity}
\end{figure}
\begin{figure}[t]
    \centering
    \includegraphics[width=\linewidth]{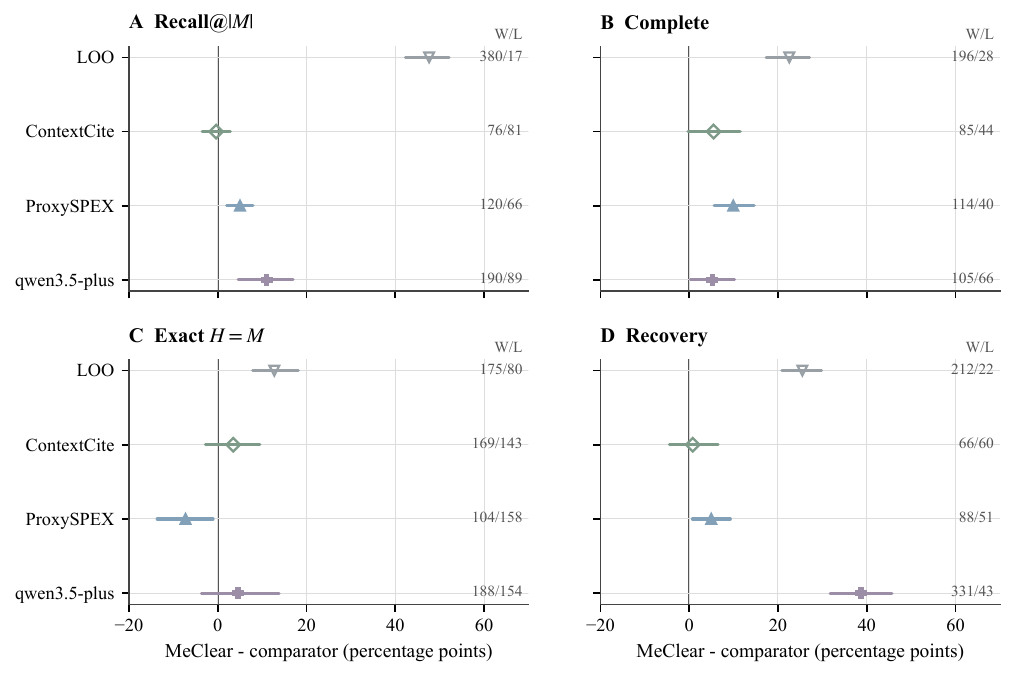}
    \caption{Paired differences between MeClear and each comparator on the common 745-case cohort. Points show MeClear-minus-comparator estimates for Recall@$|M|$, Complete Set Recall, Exact Set Match, and Binary Task Recovery; horizontal segments show 95\% two-level cluster-bootstrap intervals. The vertical line marks zero, and the right-side W/L counts report case-level positive and negative differences.}
    \label{fig:paired_differences}
\end{figure}


\section{Appendix D: Experimental Details and Parameter Settings}
\label{app:experimental_details}

Due to the strict page limit of the main paper, we provide additional experimental configuration, parameter settings, and supplementary analyses in this appendix.

\subsection{D.1 Experimental Setup}

We evaluate MeClear on LoCoMo using Kimi-k2.6 as task agent and Qwen3.6-Flash as judge evaluator. The evaluation is constructed from 368 clean queries across ten conversations and contains 745 causally verified cases with 1,115 fault records. The cohort includes 375 direct conflicts with $|M|=1$, 285 redundant conflicts with $|M|=2$, and 85 joint interactions with $|M|=2$. Two of the 747 initially selected cases were excluded before paired evaluation because complete attribution outputs could not be obtained. These exclusions resulted from evaluation failures rather than method scores, leaving 745 cases with valid outputs for compared methods under the same evaluation protocol. The interaction-by-fault composition contains 344 explicit-conflict cases, 327 precise-factual cases, and 74 precise-temporal cases. Among them, the direct-conflict group contains 173 explicit, 161 factual, and 41 temporal cases; the redundant group contains 135 explicit, 122 factual, and 28 temporal cases; and the joint-interaction group contains 36 explicit, 44 factual, and 5 temporal cases.

Memories are constructed with Mem0~2.0.12 and stored in Qdrant, resulting in 11,302 memory records. For each query, the memory system uses the retrieval budget $K=5$ specified in the main paper. The retrieved context is frozen before counterfactual attribution, and all coalition evaluations and clearance operations are performed without re-running retrieval, consistent with Equation~\eqref{eq:active_context_retrieval}. Controlled faults are retained only after behavioral verification. The clean context must remain correct in both trials, while a direct conflict must make the corrupted context incorrect in both trials. For redundant conflicts, the corrupted context must remain incorrect when either injected fault is retained alone, ensuring that each record can independently sustain the failure and mask the local effect of the other. For joint interactions, each injected record must remain harmless when evaluated alone, whereas their combination must induce task failure. These conditions provide behaviorally verified target sets for evaluating direct, redundant, and jointly expressed harmful-memory effects.

Figure~\ref{fig:interaction_fault_heatmap} reports performance across the interaction-by-fault combinations. MeClear shows its advantage under redundant conflicts, where local deletion is most susceptible to masking. Across the three redundant-fault categories, MeClear achieves $81.9$--$84.8\%$ Recall@$|M|$ compared with only $10.7$--$16.1\%$ for LOO. Exact Set Match increases from $1.6$--$3.6\%$ for LOO to $45.2$--$53.6\%$ for MeClear, while Binary Task Recovery increases from $5.9$--$7.1\%$ to $63.7$--$78.6\%$. MeClear also attains the highest observed Recall@$|M|$ across the joint-interaction cells and achieves or ties the highest Recovery in most interaction-by-fault cells. The joint precise-temporal cell contains only five cases and is therefore interpreted descriptively rather than as a stable subgroup estimate. Overall, the structural breakdown supports the role of coalition-aware attribution when harmful evidence is redundant or jointly expressed.

\subsection{D.2 Parameter and Baseline Settings}

MeClear follows the main-paper configuration with retrieval budget $K=5$, permutation budget $L=16$, and tolerance settings $\kappa=\tau=0.05$. The LOO profile uses $\kappa$ to identify locally informative or inconclusive memories, while cooperative contributions are estimated using Equation~\eqref{eq:sampled_shapley} and thresholded by $\tau$ in Equation~\eqref{eq:operational_harmful_set}. The task value is normalized to $[0,1]$ as assumed in Proposition~\ref{prop:finite_sample_attribution}.
All primary results use $L=16$, with $L\in\{4,8,16\}$ evaluated for sensitivity analysis. ContextCite and ProxySPEX use sampling budget $B=32$, with $\alpha=0.01$ for ContextCite and interaction order two for ProxySPEX. The direct LLM baseline uses Qwen3.5-Plus with threshold $0.5$, while LOO uses $\widehat d_{t,i}<-\kappa=-0.05$. Memory construction and fault generation use Moonshot-v1-32k and Qwen3.7-Plus, respectively. All paired comparisons use identical frozen retrieved contexts.
Figure~\ref{fig:permutation_sensitivity} shows that larger permutation budgets improve attribution quality. Recall@$|M|$ increases from $67.6\%$ at $L=4$ to $85.9\%$ at $L=16$, Complete Set Recall increases from $68.1\%$ to $86.7\%$, and Exact Set Match improves from $37.3\%$ to $47.0\%$. Extra-background selection remains non-monotonic ($0.581$, $0.685$, and $0.623$ for $L=4,8,16$), indicating that the improvement is not caused by excessive deletion. Therefore, $L=16$ is adopted for the main experiments. All recovery evaluations use two independent Answer/Judge trials with fixed random seeds.
\begin{figure}[t]
    \centering
    \includegraphics[width=\linewidth]{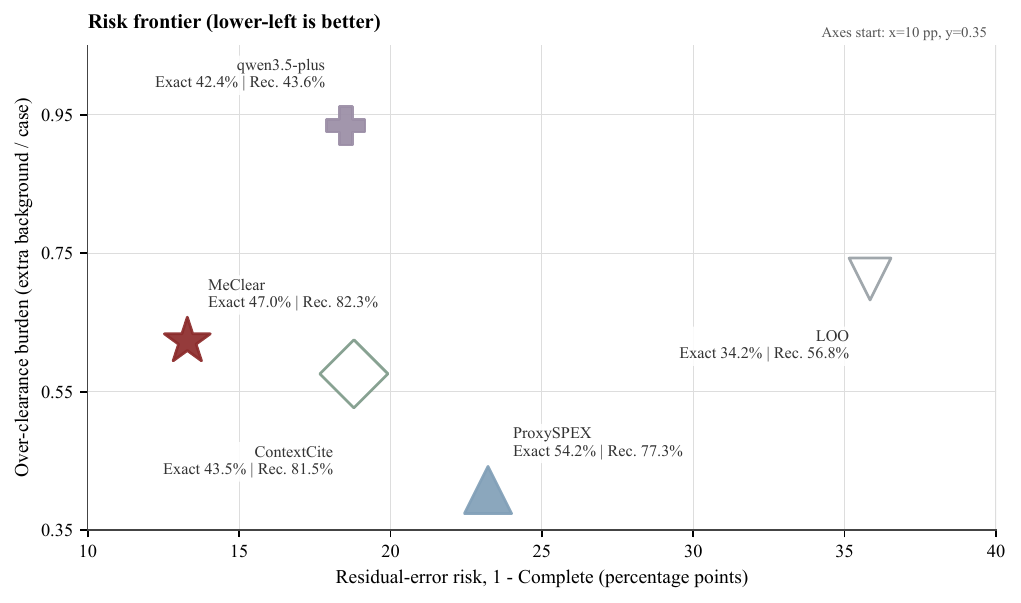}
    \caption{Risk frontier of five methods on the common cohort. The x-axis shows residual-error risk, defined as one minus Complete Set Recall, the y-axis shows the average number of extra background memories selected per case, and bubble size indicates Binary Task Recovery. Lower values are preferred on both axes.}
    \label{fig:risk_frontier}
\end{figure}

\subsection{D.3 Metrics and Supplementary Evaluation}

Following the main paper, we evaluate MeClear using four metrics: Target Recall at $|M|$, Complete Set Recall, Exact Set Match, and Binary Task Recovery. For a case with verified fault set $F$ and $|F|=m$, let $r_1,\ldots,r_m$ denote the top-$m$ ranked memories. Recall@$|M|$ is defined as
\begin{equation}
\operatorname{TR@}|M|
=
\frac{|\{r_1,\ldots,r_m\}\cap F|}{|F|}.
\label{eq:app_target_recall}
\end{equation}
The overall Recall@$|M|$ is target-micro averaged over the 1,115 verified fault records. Complete Set Recall measures whether all verified faults are selected, while Exact Set Match additionally requires no background selection. Binary Task Recovery counts only cases where both independent post-clearance trials succeed. These metrics respectively evaluate ranking quality, harmful-set completeness, selection precision, and task recovery.

\begin{figure}[t]
    \centering
    \includegraphics[width=\linewidth]{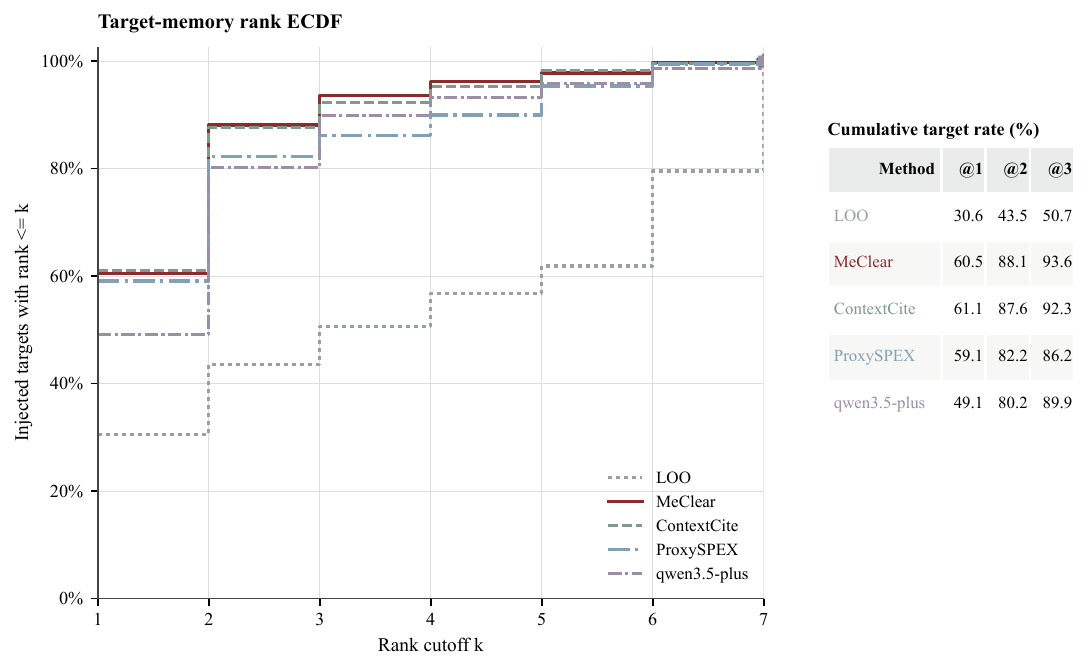}
    \caption{Target-memory rank ECDF over 1,115 injected targets. Curves show cumulative target coverage within rank $k$, with summary rates reported at $k=1$, $2$, and $3$.}
    \label{fig:target_rank_ecdf}
\end{figure}

All comparisons use the same paired cohort of 745 cases to ensure that performance differences are attributable to method behavior rather than changes in evaluation samples. We report percentile 95\% confidence intervals using 2,000 two-level cluster-bootstrap replicates, with conversations and query clusters resampled hierarchically to preserve dependencies among cases from the same source. Paired differences are computed before resampling. Figure~\ref{fig:paired_differences} shows that MeClear improves over LOO by $47.6$, $22.6$, $12.8$, and $25.5$ percentage points in Recall@$|M|$, Complete Set Recall, Exact Set Match, and Recovery, respectively. Compared with ProxySPEX, MeClear improves Recall@$|M|$, Complete Set Recall, and Recovery by $5.0$, $9.9$, and $5.0$ points, while ProxySPEX achieves higher Exact Set Match by $7.2$ points. MeClear also improves Recovery over Qwen3.5-Plus by $38.7$ points. ContextCite remains the closest comparator, with intervals overlapping zero across metrics. MeClear primarily improves harmful-memory coverage and recovery by leveraging cooperative attribution to identify interacting harmful evidence.

Figure~\ref{fig:risk_frontier} evaluates the trade-off between incomplete harmful-set removal and unnecessary context modification. MeClear achieves the lowest residual-error risk of $13.3\%$ and highest Binary Task Recovery of $82.3\%$, while selecting $0.623$ extra background memories per case. ProxySPEX selects fewer extra memories at $0.408$ per case and obtains higher Exact Set Match of $54.2\%$, but suffers higher residual risk of $23.2\%$ and lower Recovery of $77.3\%$. ContextCite achieves comparable Recovery of $81.5\%$ with $0.576$ extra selections, but retains higher residual risk of $18.8\%$. LOO and Qwen3.5-Plus show larger trade-offs, with residual risk of $35.8\%$ and extra selection of $0.934$, respectively. Figure~\ref{fig:target_rank_ecdf} further shows that MeClear provides stronger multi-rank concentration of harmful memories, reaching $88.1\%$ and $93.6\%$ target coverage at ranks $2$ and $3$. Although ContextCite slightly exceeds MeClear at rank $1$ with $61.1\%$ versus $60.5\%$, MeClear surpasses it at higher ranks. It is important for redundant and joint interactions, where effective clearance requires identifying multiple harmful memories rather than a single salient record.

\end{document}